\xpatchcmd{\minted}{\VerbatimEnvironment}{\VerbatimEnvironment}{}{}
\newcommand\vldbpagestyle{plain} 
\newtheorem{theorem}{Theorem}[section]
\newtheorem{lemma}{Lemma}
\newtheorem{assumption}[theorem]{Assumption}
\newtheorem{proposition}{Proposition}
\newcommand{\ours}{{\sf FederatedScope}\xspace}
\definecolor{keyword_color}{RGB}{40,40,255}
\definecolor{comment_color}{RGB}{0,96,96}
\newcommand{\stitle}[1]{\vspace{0.6ex}\noindent{\bf #1}}
\newcommand{\ssstitle}[1]{\vspace{0.6ex}\noindent$\bullet$~{\em #1}}
\newcommand{\squishlist}{
	\begin{list}{$\bullet$}{
		\setlength{\itemsep}{0pt}
		\setlength{\parsep}{3pt}
		\setlength{\topsep}{3pt}
		\setlength{\partopsep}{0pt}
		\setlength{\leftmargin}{1.0em}
		\setlength{\labelwidth}{1em}
		\setlength{\labelsep}{0.5em}
   }
}
\newcommand{\squishenum}{
	
	\begin{list}{\usecounter{scount}}{
		\setlength{\itemsep}{0pt}
		\setlength{\parsep}{3pt}
		\setlength{\topsep}{3pt}
		\setlength{\partopsep}{0pt}
		\setlength{\leftmargin}{1.2em}
		\setlength{\labelwidth}{1em}
		\setlength{\labelsep}{0.5em}
	}
}
\newcommand{\squishend}{
	\end{list}
}
\begin{document}
\title{FederatedScope: A Flexible Federated Learning Platform \\ for Heterogeneity}

\author{
Yuexiang Xie$^*$,
Zhen Wang$^*$, 
Dawei Gao,
Daoyuan Chen$^\dagger$,
Liuyi Yao$^\dagger$,
Weirui Kuang$^\dagger$, \\
Yaliang Li$^\ddagger$,
Bolin Ding$^\ddagger$,
Jingren Zhou \\ 
Alibaba Group
}

\begin{abstract} 
Although remarkable progress has been made by existing federated learning (FL) platforms to provide infrastructures for development, these platforms may not well tackle the challenges brought by various types of heterogeneity, including the heterogeneity in participants' local data, resources, behaviors and learning goals.
To fill this gap, in this paper, we propose a novel FL platform, named \emph{\ours}, which employs an event-driven architecture to provide users with great flexibility to independently describe the behaviors of different participants.
Such a design makes it easy for users to describe participants with various local training processes, learning goals and backends, and coordinate them into an FL course with synchronous or asynchronous training strategies.
Towards an easy-to-use and flexible platform, \ours enables rich types of plug-in operations and components for efficient further development, and we have implemented several important components to better help users with privacy protection, attack simulation and auto-tuning.
We have released \ours at \href{https://github.com/alibaba/FederatedScope}{https://github.com/alibaba/FederatedScope} to promote academic research and industrial deployment of federated learning in a wide range of scenarios.
\end{abstract}

\maketitle

\pagestyle{\vldbpagestyle}

\renewcommand*{\thefootnote}{\fnsymbol{footnote}}
\footnotetext[1]{Co-first authors.} 
\footnotetext[2]{Equal contribution, listed in alphabetical order.} 
\footnotetext[3]{Corresponding authors, email addresses: \{yaliang.li, bolin.ding\}@alibaba-inc.com}
\renewcommand*{\thefootnote}{\arabic{footnote}}
\renewcommand{\shortauthors}{Yuexiang Xie, Zhen Wang, Dawei Gao, Daoyuan Chen, Liuyi Yao, Weirui Kuang, Yaliang Li, Bolin Ding, and Jingren Zhou}
\renewcommand{\shorttitle}{FederatedScope: A Flexible Federated Learning Platform for Heterogeneity}

\section{Introduction}
\label{sec:intro}
As one of the feasible solutions to address the privacy leakage issue when utilizing isolated data from multiple sources, Federated Learning (FL)~\cite{mcmahan2017communication,konevcny2016federated,yang2019federated} has rapidly gained enormous popularity in both academia and industry~\cite{yang2019federated_app,hard2018federated,xu2021federated, leroy2019federated}. Such widespread adoption of FL is inextricably tied to the support of FL platforms, such as TFF~\cite{tff}, FATE~\cite{yang2019federated}, PySyft~\cite{pysyft} and FedML~\cite{fedml}, which provide users with functionalities to get started quickly and focus on developing new FL algorithms and applications.

Although the existing FL platforms have made remarkable progress, there are more burgeoning demands from FL research and deployment, which are mainly brought by the heterogeneity of FL.
Specifically, we summarize the heterogeneity of FL as the following four aspects.

(1) \textbf{Heterogeneity in Local Data}. 
The isolated data in FL, which are often owned by different parties or generated by different edge devices, vary a lot among the FL participants in terms of quality, quantity, underlying distributions, etc.
Such heterogeneity in data can lead to the sub-optimal performance when applying the vanilla FedAvg~\cite{mcmahan2017communication}, i.e., producing one global model for all the participants by the same local training process.
Recent studies on Personalized FL~\cite{fallah2020personalized,tan2021towards} customize the local training processes of the participants according to their local data, including applying client-specific parameters, training schedules, submodules, and fusing approaches. Supporting these customizations brings challenges to the extensibility and flexibility of FL platforms.

(2) \textbf{Heterogeneity in Participants' Resources}. 
Apart from the heterogeneity of data, the participants' resources can also be very different, including computation resources, storage resources, communication bandwidths, reliability, and so on.
However, most of the existing FL platforms~\cite{tff, yang2019federated, fedml} adopt a synchronous training strategy for FL, which might lead to additional overhead caused by the heterogeneity in participants' resources.
For example, with the synchronous training strategy, the whole FL system could usually suffer from the slow clients that might be caused by network congestion, sluggish local training, or even device crash.
Thus, it would be better if FL platforms allow users to implement/execute FL with asynchronous training strategies~\cite{chen2020asynchronous,wu2020safa} to ensure both efficiency and effectiveness in real-world FL applications.

(3) \textbf{Heterogeneity in Participants' Behaviors}.
In the vanilla FedAvg~\cite{mcmahan2017communication}, the participants only exchange homogeneous information (e.g., model parameters or gradients) and have the same behaviors (e.g., updating models based on the local data via SGD).
However, the practical and recent FL applications often require to exchange various types of information among participants and execute diverse training processes, which leads to rich behaviors.
For example, in the emerging federated graph learning applications~\cite{xie2021federated,zhang2021subgraph,wu2021fedgnn,wang2022federatedscopegnn}, participants exchange and handle multiple types of information, including model parameters, gradients, node embeddings, etc; Real-world FL applications might involve participants with different backends, which indicates that participants need backend-dependent computation graphs and accordingly perform diverse training processes. 
Besides, due to the heterogeneity in local data discussed above, participants could locally train with client-specific configurations that are suitable for their local data to achieve a better model utility.
The heterogeneity in participants' behaviors, caused by handling various exchanged information, executing diverse training processes and applying client-specific configurations in local training, prompts the FL platforms to support flexible expression for rich behaviors of participants.

(4) \textbf{Heterogeneity in Learning Goals}.
Towards a more general scope of utilizing isolated data, some recent FL studies~\cite{xie2021federated,fedem,smith2017federated} propose to allow participants to collaboratively learn common knowledge while optimizing for different learning goals.
For example, several research institutes can federally train a graph neural network to capture the generalizable structural patterns of molecules by using their own molecule data that correspond to different learning goals, such as predicting solubility, enzyme type, penetration, etc.
Another example, which can benefit from such heterogeneous federated learning, is the pre-trained language models in natural language processing~\cite{tian2022fedbert}, since participants can collaboratively train with different objectives based on their private corpora that cannot be directly shared in some scenarios such as medicine and finance.
To handle the heterogeneity in learning goals, FL platforms should allow participants to locally train with different learning objectives and only share parts of the local models for collaboratively learning.

The aforementioned aspects of heterogeneity are commonly observed in real-world FL applications. Although we discuss them in the above four aspects, they can appear jointly in a single application. 
Facing such mixed heterogeneity, users are eager for an FL platform that is \textbf{flexible}: Participants should be allowed to express their diverse behaviors and different learning goals according to their own local data and system resources, and these participants can be effortlessly coordinated with synchronous or asynchronous training strategies for completing the federal training procedure based on a pre-defined consensus.
Towards such a purpose, in this paper, we propose \ours, a novel FL platform to handle the heterogeneity of FL.

To provide such flexibility, we propose \ours, a novel FL platform that employs the event-driven architecture~\cite{eventdriven, kafka} to frame FL courses into $<$event, handler$>$ pairs. 
Note that it is not trivial to build up a comprehensive FL platform with such a formalization. In particular, considering the heterogeneity of federated learning, such formalization is expected to express diverse behaviors of servers and clients for handling the heterogeneity, and be well-modularized so that users can conveniently develop new FL algorithms and applications.
To fulfill this goal, the events provided in \ours can be categorized into two types, i.e., \textit{events related to message passing} and \textit{events related to condition checking}, which are used to describe what happens in the FL courses from the perspective of an individual participant.
For example, in a vanilla FedAvg, a typical event of clients is ``{\sf receiving\_models}'', which indicates that clients receive the global model from the server, and the corresponding handler can be ``\textit{train the received global model based on the local data, and then return the model updates}''.
The handlers, triggered by events, describe what actions should be taken when a specific event happens.
These events happen in the intended logical order and naturally trigger the corresponding handlers, which can precisely express various FL algorithms and procedures.
All the participants can be coordinated with the pre-defined events related to message passing and condition checking to construct suitable FL course for specific scenarios and applications.

\begin{figure}[t]
    \centering
    \includegraphics[width=0.45\textwidth]{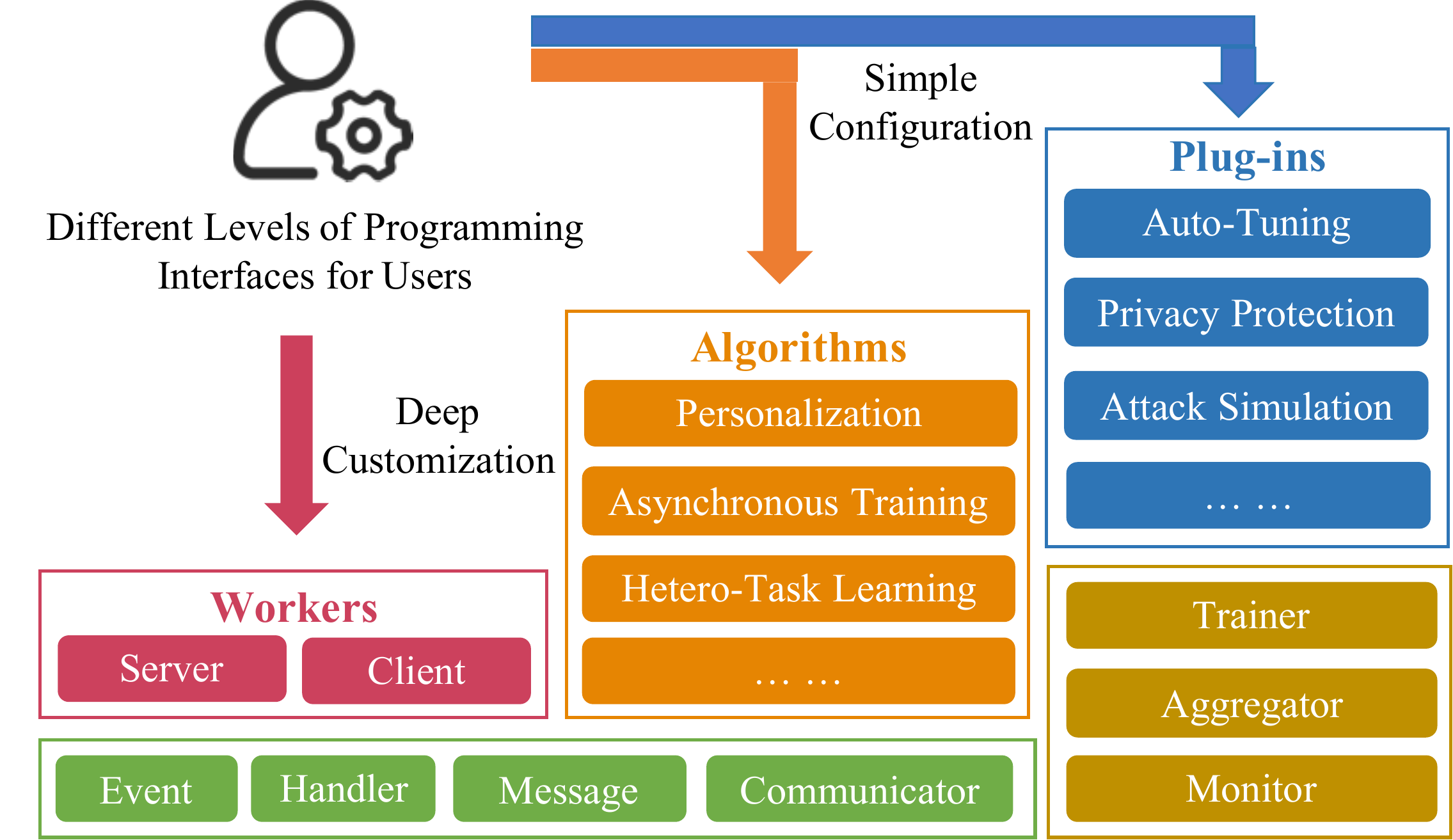}
    \vspace{-0.10in}
    \caption{\ours provides different levels of programming interfaces for users.}
    \vspace{-0.10in}
    \label{fig:programming_interfaces}
\end{figure}

Besides the flexibility, as an FL platform, \ours also provide great \textbf{usability}, that is, \ours provides different levels of programming interfaces to meet different requirements from users, as demonstrated in Figure~\ref{fig:programming_interfaces}.  
For the users who want to design new FL algorithms, as discussed above, \ours allows them to add new $<$event, handler$>$ pairs to implement their ideas.
For the users who want to directly apply existing FL techniques to certain application scenarios, \ours provides rich sets of events and corresponding handlers, core functionalities and several important plug-in components, all of which can be directly called, and thus users only need to focus on a necessary set of interfaces to be integrated or implemented. 
For example, we have implemented several personalization federated algorithms, including applying client-wise configuration, maintaining client-wise sub-modules, global-local fusing, etc, for users' convenient usage. 
Besides personalization, \ours provides users with functionalities such as asynchronous training, privacy protection and cross-backend FL, and several important plug-ins such as attack simulation for protection verification, and auto-tuning for helping users to automatically seek suitable hyperparameters.

Last but not least, \ours also has great \textbf{extensibility}, which is brought by the fact that the set of $<$event, handler$>$ pairs can be easily extended by adding new ones. 
Take personalization again as an example: to add a new personalization, users only need to add new behaviors (e.g., adopting client-specific training course) in the corresponding handlers. 
Such extension convenience also holds for all the other functions such as federated aggregators, asynchronous training, privacy-protection, etc. 
Through this way, \ours can be easily extended to include new functions or plug-ins to satisfy new requirements brought by new developments and support a variety of new scenarios.

\stitle{Contributions.}
Our contributions can be summarized as: 
(1) Motivated by the heterogeneity challenges from a wide range of FL applications, we propose and release \ours, a novel FL platform to handle heterogeneity in FL. The proposed \ours promotes the development of FL techniques and the deployment of FL applications. 
(2) With the event-driven architecture, \ours provides users with rich yet extendable sets of events and corresponding handlers, core functionalities such as asynchronous training, personalization and cross-backend FL, and several important plug-in components. These implementations make it easy for users to apply FL algorithms in both academia and industry applications.
(3) \ours brings great flexibility, usability and extensibility to users, broadens the application scope and enables more tasks that would otherwise be infeasible due to challenges brought by various types of heterogeneity in FL.

\section{Preliminary}
\subsection{Problem Definition}
\label{sec:preliminary}

Federated Learning (FL)~\cite{konevcny2016federated, mcmahan2017communication, yang2019federated}, a learning paradigm for collaboratively training models from dispersed data without directly sharing private information, involves multiple participants who are willing to contribute their local data and computation resources.
We use \textit{server} to denote the participant(s) who are responsible for coordinating and aggregating, while other participants are \textit{clients}.
During a typical \textit{training round} of an FL course, clients update the global model received from the server by training it with local data, and send the model updates back to the server for collaborative aggregation. In repeated training rounds, the (possibly sensitive) training data is always kept locally in each client; the server and clients only exchange aggregated and meta information, such as model parameters, gradients, public keys, hyperparameters, etc, which, to some degree, alleviates concerns about data privacy.
To further satisfy different types of formal privacy protection requirements, various privacy protection techniques can be integrated into FL, such as Differential Privacy (DP)~\cite{triastcyn2019federated, wei2020nbafl}, Homomorphic Encryption (HE)~\cite{hardy2017private, fang2021large}, and Secure Multi-Party Computation (MPC)~\cite{melis2019exploiting, bonawitz2017practical}.
In short, the goal of FL is to jointly train a global model in a privacy-preserving manner and achieve a better performance compared to that without collaboration.

Formally, there are $M$ clients, and the $m$-th client owns a {\em private training dataset} $\mathcal{D}_m=\{(x_i^{(m)}, y_i^{(m)}) \in \mathcal{X} \times \mathcal{Y}, ~ i=1,2,\ldots,|\mathcal{D}_m|\}$, where $\mathcal{X}$ and $\mathcal{Y}$ are the input feature space and the label space, respectively. $\mathcal{D}_m$ is stored in the $m$-th client's private space, and $n=\sum_{m=1}^{M}|\mathcal{D}_m|$ is the total number of training instances.
Without sharing $\mathcal{D}_m$ directly with each other and the server, the $M$ clients together aim to train a model $h_{\theta}: \mathcal{X} \rightarrow \mathcal{Y}$ parameterized by $\theta$, with the {\em loss} $F: \mathcal{Y} \times \mathcal{Y} \rightarrow \mathbb{R}^{+} \cup \{0\}$. The {\em FL loss function} is:
\begin{equation}
    \mathcal{L} = \frac{1}{n}\sum_{m=1}^{M}\sum_{(x_i^{(m)}, y_i^{(m)}) \in \mathcal{D}_m} F\left(h_{\theta}(x_i^{(m)}),y_i^{(m)}\right). \label{eq:loss}
\end{equation}

\begin{figure*}[t]
    \centering
    \includegraphics[width=0.92\textwidth]{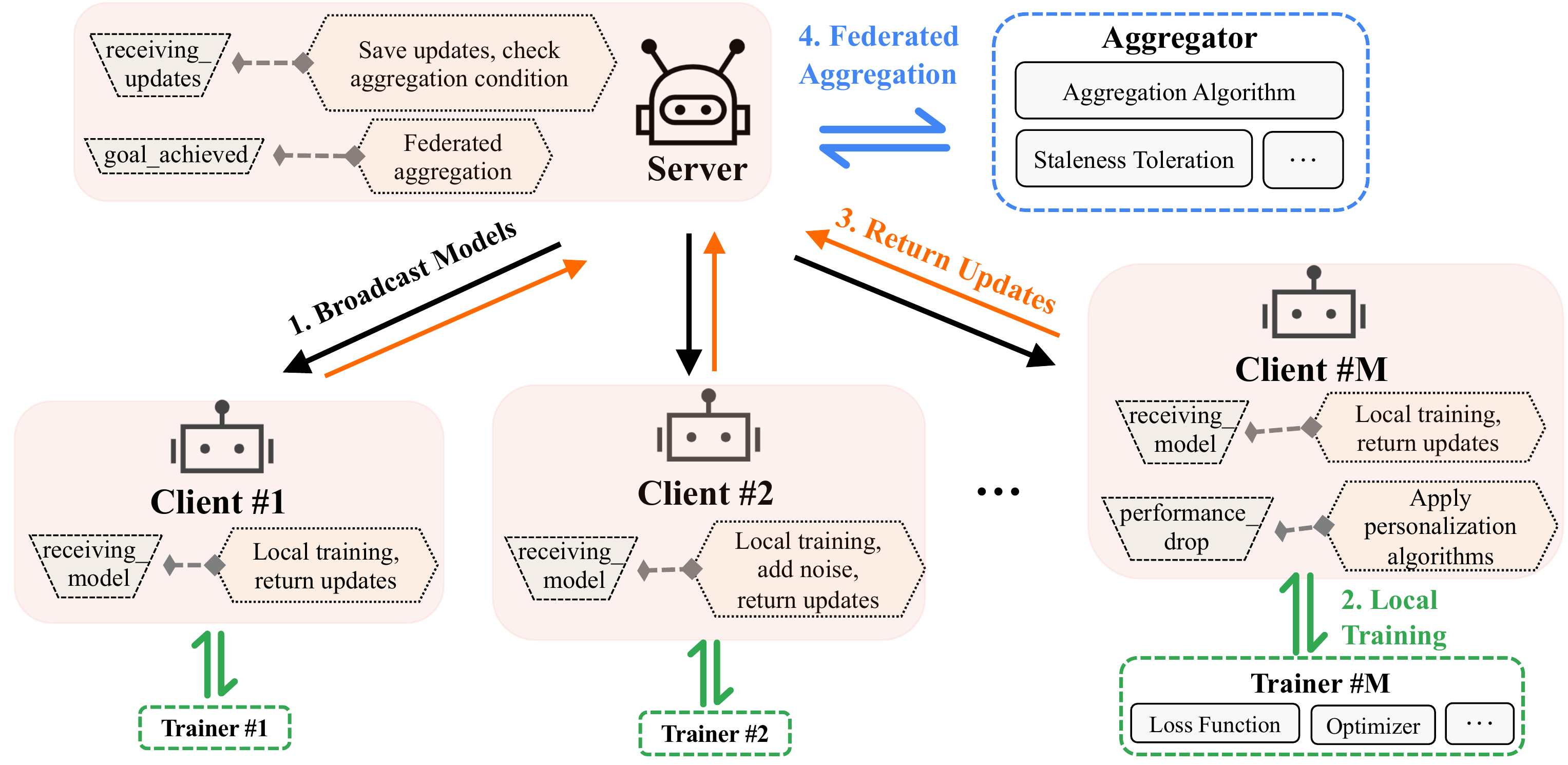}
    \vspace{-0.1in}
    \caption{Overview of an FL round implemented with FederatedScope.}
    \label{fig:oberview}
\end{figure*}

\stitle{Extensions.}
For the simplicity of presentation, we focus on a {\em vanilla FL} to minimize the loss function in Equation~\eqref{eq:loss} in most parts of this paper. Our \ours easily supports different federated settings in real-world FL applications, with more complicated loss functions, in order to handle the heterogeneity as discussed in Section~\ref{sec:intro}.
For example, for the purpose of personalization, the input feature spaces, the label spaces, and the underlying learning goals can be different for different clients. In \ours, clients can adopt different models and loss functions in local training, and only federally train the shared parts of the models. We will discuss more details in Section~\ref{subsec:p-m}.

\subsection{Related Works}
\stitle{Comparisons with existing FL platforms.}
In the recent years, growing along with the development of federated learning, federated learning platforms, including TFF~\cite{tff}, FATE~\cite{yang2019federated}, LEAF~\cite{leaf}, PySyft~\cite{pysyft}, FedML~\cite{fedml}, FedScale~\cite{lai2021fedscale}, etc., are proposed to support various kinds of applications.
These federated learning platforms provide data, models and algorithms, which saves users' effort on implementing from scratch and makes it easy for developers.
Most of these existing platforms adopt a procedural programming paradigm, which requires users to explicitly declare a sequential training process and computational graph from the global perspective. However, such a design makes the existing FL platforms unable to provide the required flexibility and extendability for the burgeoning demands from FL research and deployment, which limits the promotion and broadened the application scenarios of federated learning. Meanwhile, users also expect FL platforms to become more convenient and simple to handle the aforementioned heterogeneity in real-world FL applications.
To tackle these challenges, we propose \ours to provide great flexibility and extendability for users to handle the heterogeneity in FL. \ours provides rich implementations of FL algorithms for convenient usage, and provides different levels of programming interfaces for users to develop new algorithms.

\stitle{Comparisons with distributed machine learning.}
In distributed machine learning, the server has the rights to control the behaviors of clients; While in federated learning, all the participants could have their own behaviors following the achieved consensus to collaboratively train the model. For example, given the consensus that participants only need to share parts of the model parameters, clients can apply client-wise training configurations (e.g., training steps, learning rate, regularizer, optimizer, and so on) to locally train the model, and keep the non-shared model parameters and the learning goals (might be different among clients) private. To satisfy such requirements, \ours give the rights to the participants to describe behaviors from their own respective perspectives.
Besides, in federated learning, the quality, quantity, and distributions of clients' local data can be very diverse. Such heterogeneity in data makes it challenging to collaboratively learn, which motivates \ours to provide novel functionalities, such as asynchronous training strategies (in Section~\ref{subsec:asynchronous}) and personalized federated learning algorithms (in Section~\ref{subsec:p-m}), to make better use of their isolated data.
Furthermore, there exist more privacy/security protection requirements in federated learning compared to distributed machine learning. To tackle this, \ours provides some Byzantine fault tolerance algorithms to defend the malicious participants (in Section~\ref{sec:implementation}), and privacy protection component (in Section~\ref{subsec:privacy_protection}) and attack simulation component (in Section~\ref{subsec:privacy_attack}) to enhance and verify the privacy protection strength of FL applications.
\section{Design of FederatedScope}
\label{sec:methodoloy}
In this section, we introduce the design of \ours, showing how an FL training course can be framed and implemented using an event-driven architecture and why \ours makes it easy to handle heterogeneity in federated learning.
The overall structure is illustrated in Appendix~\ref{appendix:overvall_architecture}.

\subsection{Overview}
An FL course consists of multiple rounds of training, and a typical round implemented with \ours is illustrated in Figure~\ref{fig:oberview}, which includes four major steps:
(1) {\bf Broadcast Models:} the server broadcasts the up-to-date global model to the involved clients;
(2) {\bf Local Training:} once received the global model, clients perform local training using their trainer based on their private data;
(3) {\bf Return Updates:} after local training, clients return the model updates to the server;
(4) {\bf Federated Aggregation:} with the help of an aggregator, the server performs federated aggregation on the received model updates, and optimizes the global model.
To facilitate efficient development and deployment of such an FL course with multiple computation/communication rounds and different roles, there are two important design principles of \ours.
\squishlist
\item {\em Minimal dependency between different roles.} In \ours, each client or the server takes care of only the minimal portion of job it needs to collaboratively accomplish, including the model to be collaboratively learned and the exchanged messages. While allowing both synchronous and asynchronous training, we want to avoid introducing too much duty of coordinating and scheduling to the server. This is important especially when we consider the heterogeneity of resources and learning goals for the clients.
\item {\em Flexible and expressive programming interfaces for algorithm development and plug-in.} \ours aims to enable efficient development of FL algorithms via proper abstraction of FL courses and providing a necessary set of interfaces that developers need to implement.
Moreover, for the purpose of privacy protection and other functionalities, operators (e.g., for noise injection and encryption) and components (e.g., for auto-tuning) need to be plugged into the FL course in a flexible way. 
\squishend
Based on these principles, we first give an overview of our design.

\stitle{Basic infrastructure.} \ours employs an {\em event-driven} architecture within which the behaviors of different clients and the server in an FL course can be programmed (relatively) independently. The information exchange among participants and conditions to be checked by participants during the FL course are described as {\em events} (trapezoids within pink areas in Figure~\ref{fig:oberview}); when an event occurs, the corresponding {\em handlers} (hexagons within pink areas) that describes the behavior of a participant is triggered. For example, when ``{\sf receiving\_models}'' occurs, ``local training'' in a client is triggered; when ``{\sf goal\_achieved}'' occurs, ``federated aggregation'' in the server is triggered.
It turns out that the pairs of events and handlers are sufficiently expressive to describe all the existing (both synchronous and asynchronous) FL algorithms, as well as new ones we implement in \ours.

With such an infrastructure, \ours can easily support different machine learning backends (e.g., PyTorch~\cite{pytorch} and TensorFlow~\cite{tensorflow}). 
All users need to do is to transform exchanged information (called {\em messages}), which might be related to participants' local backends, into backend-independent ones before sharing, and parse the received messages according to the receiver's backend for further usage.
We call this procedure {\em message translation}.

\stitle{Programming interfaces.} Within the above infrastructure \ours provides, for each client or server, we only need to implement a {\em Trainer} (green dashed rectangles in Figure~\ref{fig:oberview}) or {\em Aggregator} (blue dashed rectangles), respectively, which encapsulates the details of local training or federated aggregation with well-defined interfaces, e.g., the loss function, optimizer, training step, aggregation algorithms, etc. A Trainer can be implemented as if a machine learning model is trained on the local data owned by a client.

Besides Trainer and Aggregator, the design of \ours allows flexible plug-in operators and components. For example, in order to ensure differential privacy, noise injection operators can be plugged to perturb the messages to be sent, where the amount of noise can be customized for different training tasks.
More details of the programming interfaces can be found in Section~\ref{sec:implementation}.

\stitle{Why event-driven?}
Benefited from such event-driven design, \ours provides users with expressiveness and flexibility to handle various types of heterogeneity in FL. Users would not be required to implement the FL courses from a centralized perspective as in a procedural programming paradigm, which might be rather complicated for real-world FL tasks due to the heterogeneity. Instead, each individual participant (a server or a client) is instantiated with its own events and handlers to independently describe its behaviors, such as what actions to take when receiving a certain type of message from others, how to perform local training and what information should be returned (for a client), and when to perform federated aggregation and start/terminate the training process (for a server).

In this way, the server performs federated aggregation under flexibly triggered conditions, which can prevent the training process from being blocked by unreliable or slow clients (more details in Section~\ref{subsec:asynchronous}). Different clients may customize their training configurations according to their own data distributions, tasks, and resources, such as training with different trainers for personalization (Section~\ref{subsub:personalized}), learning toward different goals (Section~\ref{subsub:multiple}), and running on different backends (Section~\ref{subsec:cross-backend}).
\ours also provides some native plug-in modules (Section~\ref{sec:plugins}) for various important functionalities, including privacy protection, attack simulation, and auto-tuning.
Before diving into these parts, we first provide more details about the event-driven design of \ours in Section~\ref{subsec:event_driven}.

\subsection{Event-driven Architecture}
\label{subsec:event_driven}
Event-driven architectures are widely adopted in distributed systems~\cite{eventdriven, kafka}.
With such an architecture, an FL training course in \ours can be framed into $<$event, handler$>$ pairs: the participants wait for certain {\em events} (e.g., receiving model parameters broadcast from the server) to trigger corresponding {\em handlers} (e.g., training models based on the local data). Hence, developers can express the behaviors of a participant (a server or a client) independently from its own perspective, rather than sequentially from a global perspective (considering all the participants together), and the implementations can be better modularized.

The events in \ours are categorized into two classes. One is related to message passing (e.g., exchanging information with others), which is also considered in previous FL platforms, e.g., receiving user-defined messages in FedML~\cite{fedml} and invoking requests in FedKeeper~\cite{chadha2020towards}.
The other class of events checks the satisfaction of customizable conditions (e.g., whether a pre-defined percentage of feedback from clients has been received). Some examples of events provided in \ours are presented in Appendix~\ref{appendix:events}.
Next we will introduce these types of events in more detail.

\ssstitle{Events Related to Message Passing.}
The exchanged information among participants are abstracted as messages, and an FL training course consists of several rounds of message passing.
Multiple types of messages are involved in an FL course, including but not limited to building up (e.g., \textit{join\_in} and \textit{id\_assignment}), training (e.g., \textit{model\_param} and \textit{gradients}), and evaluating (e.g., \textit{metrics}).
For the participants, receiving a message can be regarded as an event, and their follow-up behaviors can be described in handling functions (i.e., the handlers) to handle the received messages.
A handling function can be invoked by the event of receiving one or more types of messages, while receiving a certain type of message should only trigger one handling function directly. 

Take the vanilla FedAvg as an example, the clients' handling function for the event ``{\sf receiving\_models}'' can be described as ``\textit{train the received global model based on the local data, and then return the model updates}'', and the servers' handling function for the event ``{\sf receiving\_updates}'' can be described as ``\textit{save the model updates, and check whether all the feedback has been received}''. 

Generally, by defining the events related to message passing, \ours provides users with expressiveness to flexibly describe heterogeneous message exchange, such as exchanging model parameters, gradients, public keys, embeddings, generators, and so on. 
Meanwhile, through customizing the operations in the corresponding handlers, users can conveniently describe rich behaviors of participants, including training models based on the local data with personalized configurations, performing federated aggregation, predicting, clustering, generating, etc.
We will discuss more about this in Section~\ref{subsec:p-m}.

\ssstitle{Events Related to Condition Checking.}
Apart from the events related to message passing, the events related to condition checking are also indispensable for FL implementations. These events and the corresponding handlers describe the participants' behaviors when certain conditions are satisfied. For example, in an FL course, for the purpose of synchronization in training, the server checks whether the updated gradients or model parameters have been received from all the clients; if yes, it invokes an event ``{\sf all\_received}'', and this event triggers the federated aggregation and pushes forward the training process. 

One important usage of the events related to condition checking is to express the customizable conditions for triggering the federated aggregation. 
Besides ``{\sf all\_received}'', in order to support asynchronous training, \ours also provides events ``{\sf goal\_achieved}'' and ``{\sf time\_up}'' for such purpose.
Specifically, ``{\sf goal\_achieved}'' indicates that a certain percentage of feedback (so-called aggregation goal) has been received, and ``{\sf time\_up}'' denotes that the user-allocated time budget for each training round has run out.
Different from the event ``{\sf all\_received}'' that forces the server to wait for feedback from all the clients, ``{\sf goal\_achieved}'' allows the training process to move forward once the server has received enough feedback, while ``{\sf time\_up}'' encourages the server to collect as much feedback as possible within the time budget, both of which enable different asynchronous training strategies in FL.

Furthermore, the events related to condition checking also can be used to describe the behaviors of participants. For example, the server can be equipped with the events ``{\sf all\_joined\_in}'' (i.e., all the clients have joined in the FL course) and ``{\sf early\_stop}'' (i.e., pre-defined early stop conditions are satisfied) to describe when to start and terminate the training process, respectively, while the clients can use the events ``{\sf performance\_drop}'' to trigger personalization when the received global model causes the performance drop, and use ``{\sf low\_bandwidth}'' to reduce the communication frequency when the available bandwidth is not enough.

\ours provides warnings if there exist conflicts, and adopts a default resolution following the ``overwriting'' principle. Specifically, in an FL course implemented with \ours, each event is only permitted to be linked with one handler directly during the execution process. If an event is linked with more than one handler, which might cause conflicts in an FL course, a warning would be raised for users by \ours, and the latest linked handler would overwrite the older ones (e.g., the default handler is overwritten by the user-customized handlers). Finally, the handlers that take effect in an FL course would be printed out and recorded in the experimental logs. Users can remove some handlers or adjust the linked orders to make sure the intended handlers would take effect in the constructed FL courses.

\ours provides lots of predefined $<$event, handler$>$ pairs, which cover the rich implementation of existing FL algorithms, such as FedAvg~\cite{mcmahan2017communication}, personalization~\cite{pFedME, fedbn, li2021ditto}, federated graph learning approaches~\cite{wang2022federatedscopegnn}, and so on.
Users can implement their own algorithms based on these provided $<$event, handler$>$ pairs.
However it is out of our scope here to exhaustively list all the possible events related to message passing and condition checking.
The most important advantage is that the event-driven design of \ours provides users with expressiveness and flexibility to implement and customize diverse FL algorithms. 
Next, with \ours, we will demonstrate how to execute asynchronous federated training (Section \ref{subsec:asynchronous}), how to describe rich behaviors of the participants (Section \ref{subsec:p-m}) and how to conduct cross-backend FL (Section \ref{subsec:cross-backend}) in order to handle the heterogeneity of FL.

\begin{figure*}[t]
    \centering
    \includegraphics[width=0.90\textwidth]{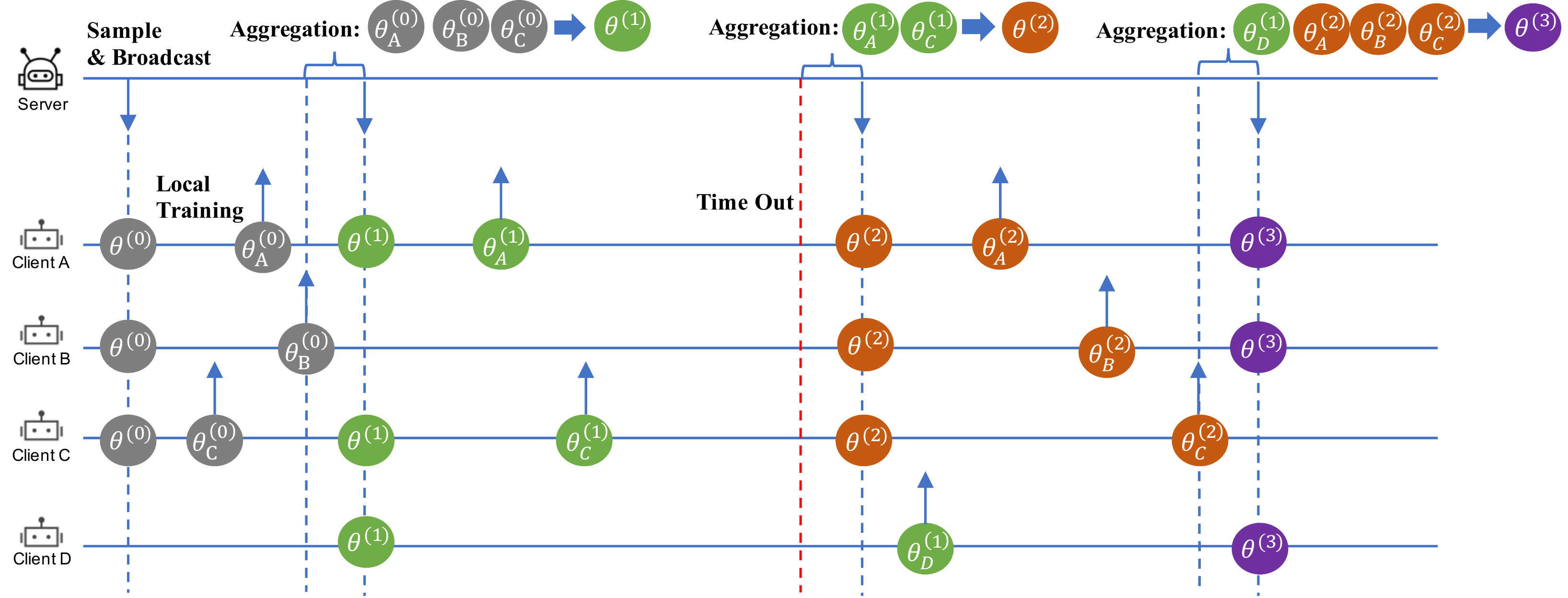}
    \vspace{-0.1in}
    \caption{An example of asynchronous training strategy in federated learning.}
    \label{fig:async}
\end{figure*}

\subsection{Supporting Asynchronous Training}
\label{subsec:asynchronous}
The asynchronous training strategies have been successfully applied in distributed machine learning to improve training efficiency~\cite{lian2015asynchronous, zhang2016staleness, chen2020asynchronous}.
Considering the aforementioned heterogeneity of FL in Section \ref{sec:intro}, the asynchronous training strategy is important to balance the model performance and training efficiency, especially in cross-device scenarios that involve a large number of unreliable and diverse clients. 
With the provided events and handlers, which specify what actions to take (i.e., handlers) when certain customizable conditions are satisfied (i.e., events), \ours supports users to conveniently design and implement suitable asynchronous training strategies for their FL applications.

\subsubsection{Behaviors for asynchronous FL}
\label{subsubsec:async_behavior}
Compared with the synchronous training strategy, several unique behaviors of participants might happen in asynchronous FL, which are modularized and provided in \ours as follows:

(i) \textit{Tolerating staleness in federated aggregation}. 
The term ``staleness'' denotes the version difference between the up-to-date global model maintained at the server and the model that a client starts from for local training, which should be tolerable to some extent in asynchronous FL. 
Specifically, in the federated aggregation, the staled updates from slow clients might be discounted in the aggregator but they still contribute to the aggregation. Of course, when the staleness is larger than a pre-defined threshold, the updates become outdated and thus can be directly dropped out.

(ii) \textit{Sampling clients with responsiveness-related strategies}.
The uniform strategy for sampling clients~\cite{mcmahan2017communication} might bring model bias in asynchronous FL, since the clients with low response speeds would contribute staled updates with higher probabilities compared with those who respond fast, which implies that the contributions of slow clients would be discounted or even dropped out in federated aggregation.
Similar phenomena are happened in synchronous FL using over-selection mechanism~\cite{tff}, as pointed out by previous studies~\cite{huba2022papaya, nguyen2022fedbuff, li2020federated}.
To tackle such an issue, with the prior knowledge of response speeds (it can be estimated from device information or historical responses), \ours provides a responsiveness-related sampling strategy (i.e., the sampled probabilities are related to the response speeds) and a group sampling strategy (i.e., clients with similar response speeds are grouped).

(iii) \textit{Broadcasting models after receiving update}. 
With the synchronous training strategy, the server broadcasts the up-to-date model to the sampled clients after performing federated aggregation.
Such a broadcasting manner, denoted as \textit{after aggregating} here, can also be adopted in asynchronous FL~\cite{wu2020safa}. 
We also provide another broadcasting manner to achieve asynchronous FL, named \textit{after receiving}~\cite{nguyen2022fedbuff}, in which the server sends out the current (up-to-date) model to a sampled idle client once the feedback is received. Compared with \textit{after aggregating}, the \textit{after receiving} manner can keep the consistent concurrency and promotes an efficient FL systems~\cite{huba2022papaya}. 

To the best of our knowledge, with the provided events and the corresponding handlers that describe the above behaviors, most of the existing studies on asynchronous FL can be conveniently implemented with \ours. 
For example, FedBuff~\cite{nguyen2022fedbuff} proposes to register the event ``{\sf goal\_achieved}'' and apply the \textit{after receiving} broadcasting manner, while SAFA~\cite{wu2020safa} suggests to equip \textit{after aggregating} broadcasting manner with event ``{\sf goal\_achieved}'' and manages clients based on their stalenesses.
Particularly, a synchronous FL course with the over-selection mechanism can be easily implemented in \ours by using event ``{\sf goal\_achieved}'' and setting the toleration to 0 (i.e., dropout all staled update).

In a nutshell, \ours is well-modularized toward flexibility and extensibility for handling the heterogeneity of FL via applying asynchronous training strategies.

\subsubsection{An example of asynchronous FL with \ours }

\begin{example}
An example of asynchronous training strategy for FL is demonstrated in Figure~\ref{fig:async}.
At the beginning of training process, the server samples a subset of the clients and broadcasts the model parameters (denoted as $\theta^{(i)}$ for the $i$-th round in the figure) to the sampled clients (e.g., Clients A/B/C in round $0$ and A/C/D in round $1$). These sampled clients perform local training based on their local data, and return the updated model parameters (e.g., $\theta^{(0)}_A$,  $\theta^{(0)}_B$ and $\theta^{(0)}_C$) to the server once they finish training. 

If the server could receive all the feedback from the sampled clients in time, as demonstrated in round $0$, event ``{\sf all\_received}'' happens, and the corresponding handler is triggered and federated aggregation is performed to generate the new global model (e.g., $\theta^{(1)}$) for the next training round.
 
However, in some cases, such as round $1$, some of the clients (e.g., Client D) fail to return the updated model in time due to some exceptions such as sluggish local training or device crash.
With the asynchronous training strategy implemented in \ours, when the allocated time budget has been run out, the ``{\sf time\_up}'' event would occur, and the server starts performing federated aggregation if the feedback has been received from a sufficient number of clients, otherwise the server executes ``remedial measures'', such as restarting the training round, sampling reliable clients additionally, or adaptively adjusting the time budget.
The staled feedback, such as $\theta^{(1)}_D$ in round $2$, can be saved and contributed to the aggregation if such staleness is still tolerable according to the user-defined threshold in the corresponding events. \hfill $\triangle$
\end{example}

As shown in this example, applying asynchronous training can improve the learning efficiency and ensure the effectiveness of the learned model when facing the aforementioned heterogeneity of FL.
More fancy strategies for asynchronous machine learning and FL~\cite{lian2015asynchronous, zhang2016staleness, chen2020asynchronous, wu2020safa} have been integrated into \ours, such as discounting the staled update, grouping clients according to their responsiveness, and so on.

\subsubsection{Convergence analysis}
We provide a theoretical analysis of convergence when applying the asynchronous training strategies in FL, under some widely-adopted assumptions~\cite{nguyen2022fedbuff, chai2021fedat, wang2021field} including smoothness and convexity of the loss function, the bounded gradient variances, and the bounded staleness.

\begin{proposition}
Consider the optimization problem defined in Equation~\eqref{eq:loss}. For each sampled client, it takes $Q$ local SGD steps with learning rate $\eta$ and returns model update to the server for federated aggregation.
Assume that the loss function $F$ is $L$-smooth and $\mu$-strongly convex, when setting $0< \mu Q \eta < 1$, the convergence of global model $\theta^{(t)}$ to the optimum $\theta^{(*)}$ after $T$ rounds satisfies:
\begin{align}
    \quad \mathbb{E}\big[F(\theta^{(T)}) - F(\theta^{(*)})\big] \leq (1-\mu Q \eta)^T\mathbb{E}\big[ F(\theta^{(0)}) - F(\theta^{(*)})\big]& \nonumber\\   
    +\frac{3LQ\eta}{\mu} \left(\sigma_l^2 + \sigma_g^2 + C\right)\left[\eta Q L (\tau_{\max}^2 +1)+\frac{1}{2}\right],&
\end{align}
where $\tau_{\max}$ denotes the maximum value of staleness caused by the asynchronous training strategies.
\end{proposition}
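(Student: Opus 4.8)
The plan is to establish a one-round contraction recursion of the form $\mathbb{E}[F(\theta^{(t+1)}) - F(\theta^{(*)})] \leq (1-\mu Q\eta)\,\mathbb{E}[F(\theta^{(t)}) - F(\theta^{(*)})] + e$, where $e$ is a per-round error independent of $t$, and then unroll it over $T$ rounds. First I would write the aggregated update as a perturbed gradient step: after the $Q$ local SGD steps, each sampled client returns an update that, in expectation over the stochastic gradients, approximates $Q$ steps of descent on $F$ evaluated at the (possibly stale) model the client started from. Starting from $L$-smoothness,
\begin{align}
F(\theta^{(t+1)}) \leq F(\theta^{(t)}) + \langle \nabla F(\theta^{(t)}),\, \theta^{(t+1)}-\theta^{(t)}\rangle + \tfrac{L}{2}\|\theta^{(t+1)}-\theta^{(t)}\|^2, \nonumber
\end{align}
I would take expectations and decompose the inner product into the ``ideal'' descent term $-Q\eta\|\nabla F(\theta^{(t)})\|^2$ plus deviation terms arising from (a) the drift of the $Q$ local iterates away from $\theta^{(t)}$ and (b) the staleness, i.e.\ the fact that slow clients compute gradients at $\theta^{(t-\tau)}$ rather than at $\theta^{(t)}$.

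Next I would bound the two deviation sources separately. For the local-drift term I would use the bounded local variance $\sigma_l^2$ together with smoothness to control $\mathbb{E}\|\theta^{(t)}_q - \theta^{(t)}\|^2$ across the inner steps $q=0,\dots,Q-1$; this is the source of the $\sigma_l^2$ contribution and, through the accumulated step sizes, of the $\eta Q L$ factor. For the staleness term, which I expect to be \textbf{the main obstacle}, I would write $\nabla F(\theta^{(t-\tau)}) - \nabla F(\theta^{(t)})$ and bound its norm by $L\|\theta^{(t-\tau)}-\theta^{(t)}\|$ via smoothness; the displacement $\theta^{(t-\tau)}-\theta^{(t)}$ is a sum of at most $\tau \leq \tau_{\max}$ consecutive aggregated updates, so squaring it and applying Jensen (Cauchy--Schwarz) over the $\tau$ summands produces the $\tau_{\max}^2$ factor. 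Here the global variance $\sigma_g^2$ and the constant $C$ (capturing the bounded gradient dissimilarity across clients and the residual bias of asynchronous sampling) enter as uniform bounds on the squared magnitude of each per-round update.

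Having controlled both deviation terms, I would apply $\mu$-strong convexity in the Polyak--\L{}ojasiewicz form $\|\nabla F(\theta^{(t)})\|^2 \geq 2\mu(F(\theta^{(t)})-F(\theta^{(*)}))$. The key bookkeeping trick is to retain half of the descent term, $-\tfrac{Q\eta}{2}\|\nabla F(\theta^{(t)})\|^2$, for the contraction, and spend the other half absorbing the quadratic smoothness term and the gradient error induced by drift and staleness; PL then converts the retained half into the factor $(1-\mu Q\eta)$ multiplying the suboptimality. This is precisely where the condition $0<\mu Q\eta<1$ is needed: it guarantees the factor lies in $(0,1)$ so the recursion is genuinely contractive, and (via a small enough $\eta$) it ensures the staleness-induced error stays dominated by the negative descent term, so the contraction survives. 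Collecting the leftovers yields the per-round error $e = 3LQ^2\eta^2(\sigma_l^2+\sigma_g^2+C)\big[\eta Q L(\tau_{\max}^2+1)+\tfrac12\big]$.

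Finally I would unroll the recursion over $t=0,\dots,T-1$. The initial suboptimality is multiplied by $(1-\mu Q\eta)^T$, matching the first term of the claimed bound, while the accumulated error forms a geometric series bounded by $e\sum_{t=0}^{T-1}(1-\mu Q\eta)^t \leq e/(\mu Q\eta)$. Substituting $e$ and simplifying the factor $Q^2\eta^2/(\mu Q\eta)=Q\eta/\mu$ gives exactly $\tfrac{3LQ\eta}{\mu}(\sigma_l^2+\sigma_g^2+C)\big[\eta Q L(\tau_{\max}^2+1)+\tfrac12\big]$, which completes the proof. The only subtle points, as noted, are the $\tau$-fold expansion of the staleness displacement and the half-descent splitting that pins down the constant $\mu$ (rather than $2\mu$) in the contraction factor.
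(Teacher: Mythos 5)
Your proposal is correct and follows essentially the same route as the paper's proof in Appendix~\ref{appendix:proof}: a smoothness descent step, a half-retained descent term (the paper realizes your ``keep half, spend half'' splitting via the polarization identity $\langle a,b\rangle=\tfrac12(\|a\|^2+\|b\|^2-\|a-b\|^2)$, which likewise yields the contraction factor $\mu Q\eta$ rather than $2\mu Q\eta$), a bound on the gradient-deviation term that separates local drift from the staleness displacement and expands the latter over at most $\tau_{\max}$ aggregated updates via Cauchy--Schwarz to produce the $\tau_{\max}^2$ factor, and finally the geometric unrolling with the series bounded by $e/(\mu Q\eta)$. All constants, including the per-round error $e=3LQ^2\eta^2(\sigma_l^2+\sigma_g^2+C)\left[\eta QL(\tau_{\max}^2+1)+\tfrac12\right]$, match the paper exactly.
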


The above result shows the value of implementing asynchronous training strategies with \ours.
The detailed proof can be found in Appendix~\ref{appendix:proof}.
Compared to previous studies~\cite{chai2021fedat, wang2021field}, we extend the convergence analysis to more challenging asynchronous FL, where we quantify the effect of staled model updates for the federated aggregation and give a general analysis on the convergence rate rather than an ergodic version~\cite{nguyen2022fedbuff}.

\subsection{Supporting Personalization \& Multi-Goal}
\label{subsec:p-m}
In many real-world applications, handling the heterogeneity of FL brings the requirements of the flexibility of participants' training behaviors. That is, clients need client-specific training processes and/or different formats of loss functions to meet their resource limitations, data properties and learning goals, all of which can be diverse as discussed in Section \ref{sec:intro}.
Formally speaking, for the $m$-th client, the local training dataset $\mathcal{D}_m$ might correspond to client-specific feature space $\mathcal{X}_m$ and label space $\mathcal{Y}_m$,
which can lead to sub-optimal performance of the global model $h_\theta$ or even makes it unusable.
To tackle this, the client could (1) maintain a local model $h_{\theta_m}$ with personalized parameters $\theta_m$ (i.e., personalization) and/or (2) minimize the local loss function $F_m$ (i.e., multiple learning goals), while only sharing parts of the models with others for federal training. Therefore, the loss function in Equation~\eqref{eq:loss} can be extended as:
\begin{equation}
    \mathcal{L}' = \frac{1}{n}\sum_{m=1}^{M}\sum_{(x_i^{(m)}, y_i^{(m)}) \in \mathcal{D}_m} F_m\left(h_{\theta_m}(x_i^{(m)}),y_i^{(m)}\right). \label{eq:pfl_loss}
\end{equation}
Note that there exists some shared parameters among clients, i.e., $\bigcap\limits_{m=1}^{M}\theta_m \neq \emptyset $, and all the clients collaboratively learn $\theta_1, \theta_2, \ldots, \theta_M$ to jointly minimize $\mathcal{L}'$.

Benefited from the event-driven architecture, \ours provides users with flexible expressiveness to describe the behavior of an individual participant (a server or a client) from its own perspective, which is crucial for handling the heterogeneity of FL via allowing the differences among participants. 
In this section, we present how \ours supports such differences among participants for handling the heterogeneity of FL through the following two ways.

\subsubsection{Personalized training behaviors}
\label{subsub:personalized}
As discussed by previous work~\cite{fallah2020personalized,tan2021towards,chen2022pfl}, the heterogeneity of FL (e.g., the heterogeneity in local data and participants' resources) might hurt the model performance for some clients and lead to the sub-optimal performance when sharing the same global model among all participants, such as vanilla FedAvg~\cite{mcmahan2017communication}, which motivates the study of personalized federated algorithms~\cite{pFedME, fedbn, li2021ditto, fedem}.
Specifically, personalized federated algorithms are proposed to apply client-specific local training courses based on their private data, including client-wise training configuration, sub-modules, global-local fusing weights, etc. 
The convergence of the federal learning process is not be determined solely by the learning process of the global model. Each participant can independently choose the most suitable snapshot of the global model.
Therefore, users are expected to describe diverse behaviors of clients to develop personalized federated algorithms, which might be rather complicated and inconvenient when using a procedural programming paradigm since lots of effort is put into sequentially coordinating and describing the participants.

With the event-driven architecture, \ours allows users to describe the behaviors of participants independently, which provides great flexibility to develop new personalization algorithms.
Users are able but not limited to (1) specify the training configurations, such as local training steps and learning rate, for an individual client; (2) define new events related to new types of exchanged messages and/or events related to customized conditions to apply personalization algorithms (e.g., {\sf performance\_drop}); (3) add personalized behaviors into handlers that are triggered for local training, such as fusing the received global model with local models before performing local training.
In most cases, such customization can be inherited from the general training behaviors and only need to focus on the differences.
Considering that clients might have different privacy protection requirements, some privacy protection techniques can be adopted. For example, clients might choose to inject noise into the model parameters before sharing them. More details of the privacy protection of messages can be found in Section~\ref{subsec:privacy_protection}.

We provide several representative personalized federated algorithms~\cite{chen2022pfl} in \ours for handling the heterogeneity in FL, including pFedMe~\cite{pFedME}, FedBN~\cite{fedbn}, FedEM~\cite{fedem}, and Ditto~\cite{li2021ditto}.
These built-in algorithms serve as examples for showing how to easily and flexibly develop new personalized federated algorithms, and can also be conveniently adopted via configuring by users in real-world applications.

\subsubsection{Multiple Learning Goals}
\label{subsub:multiple}
Note that the scope of FL also covers the scenarios where participants learn common knowledge while optimizing different learning goals~\cite{xie2021federated, fedem, smith2017federated, yao2022federated}. The participants of an FL course reach a consensus on what needs to be shared while keeping other learning parts private, especially in cross-silo scenarios. For example, several medical research institutes would like to collaboratively learn a graph neural network for capturing the common structure knowledge of molecules, but they will not disclose what is the usage of the learned structure knowledge.
They might exchange the update of the graph convolution layers while maintaining the encode layers, readout layers, and headers (such as classifier) private. In this and more similar scenarios from model pre-training, it can be difficult or even intractable for users to develop with a procedural programming paradigm via defining the static computation graph of the FL course. 

Fortunately, the event-driven design of \ours makes it easy to express and implement the FL courses with multiple learning goals. Each participant owns its local model and private data, defines its computation graph, locally trains with private learning objective, and only exchanges messages of the shared layers with others through FL.

Currently, \ours provides three representative scenarios of FL with multiple learning goals, including graph classification, molecular property inference, and natural language understanding (NLU). In the graph classification scenario, clients own different graph classification tasks and aim to collaboratively improve their own performance due to the limitation of available training data. In the molecular property inference scenario, different clients have different property inference goals, such as the solubility (regression task), the enzyme type (classification task), and the penetration (classification task), which leads to heterogeneity in terms of task type. In the NLU scenario, clients are also heterogeneous in terms of task type, and they own different NLU tasks, including sentiment classification, reading compression, and sentence pair similarity prediction. Since the development of FL with multiple learning goals is still in the early stage, \ours provides these scenarios to broaden the scope of FL applications and promote the development of innovative methods.
More details of these scenarios of FL with multiple learning goals can be found in our open source repository~\cite{ExampleMultiLearningGoal}.

In summary, \ours allows users to describe participants' behaviors from their respective perspectives and thus provides flexibility in applying different training processes and learning goals to the participants to handle the heterogeneity of FL.

\subsection{Supporting Cross-backend FL}
\label{subsec:cross-backend}
Motivated by the strong need from real-world applications, \ours supports constructing cross-backend FL courses. For example, in an FL task, some of the involved clients are equipped with {TensorFlow} while others might run with {PyTorch}. 
Thanks to the event-driven architecture, \ours can conveniently provide such functionality via a mechanism called \textit{message translation}. 
Note that such support of cross-backend FL is different from those provided by the universal languages such as ONNX~\cite{onnx} and the existing FL platforms such as TFF~\cite{tff}.

Conceptually, ONNX and TFF adopt a global perspective of constructing an FL course, which implies that the complete computation graph is globally defined and shared among all participants. In order to make it compatible with different (versions of) machine learning backends on different clients, the global computation graph is serialized into platform-independent and language-independent representations, sent to the clients, and interpreted or compiled accordingly for different backends.

\stitle{Message translation.}
\ours, in contrast, gives each participant the right of describing the computation graph on its own (for the portion it takes charge of).
Hence, participants can define the computation graph based on their running backends.
Following a pre-defined consensus on the format of messages, the participants transform the messages, e.g., gradients and model parameters, generated from the local backends into the pre-defined backend-independent format, e.g., an array of pairs of parameters and values, before sharing them with others.
This procedure is called {\em encoding}.
For the other direction, once an encoded message is received, the participant parses the message, e.g., the above array, into backend-dependent tensors in its own computation graph and backend, which is called the {\em decoding} procedure.

The encoding and decoding procedures are abstracted as two special programming interfaces in \ours with default implementations; they can also be customized for each participant based on its backend and the FL algorithm to be deployed.
\ours provides several examples of constructing cross-backend federated learning~\cite{ExampleCrossBackend}.

In supporting cross-backend FL, the advantage of \ours is two-fold: (1) \ours provides more flexibility to handle the heterogeneity of FL than other platforms that adopt a global perspective since each participant has the right to declare its computation graph independently. Specifically, the developer of each participant can focus on expressing its own computation graph, such as client-specific embedding layers and output layers, adapting to its input instance and task. There is no need to declare a super graph (i.e., the global perspective) and care about how to distribute it, reducing the implementation difficulty. (2) \ours follows the principle of information minimization, where participants only need to achieve a consensus on the format of messages and exchange necessary information. Thus, the exchanged model parameters will not leak the whole model architecture, the local training algorithm, or the personalization-related operators to other participants, which would otherwise be inferable from the global computation graph of ONNX and TFF. When such information leakage happens, malicious participants benefit from it because they can conduct a white-box attack rather than the more challenging black-box one in \ours. We will talk more about privacy attacks in Section~\ref{subsec:privacy_protection}.

\subsection{Usage of FederatedScope}
\label{sec:implementation}

In this section, we give a full example of how to set up an FL course, so that users can gain a clear and vivid understanding of \ours. At a high level, users should define a series of events and their corresponding handlers, which characterize the behaviors of participants. As shown in Figure~\ref{code:event_handler}, the handlers are expressed as callable functions and bound to the corresponding events with a register mechanism. When an event happens, the corresponding handler will be called to handle it. The example is as follows:

\begin{figure}[t]
\centering
\noindent\rule{0.474\textwidth}{0.8pt}
\begin{minted}[tabsize=2,breaklines, fontsize=\footnotesize]{python}
class CustomizedServer(Server):
    def customized_handler(args):
        Do sth. # Describe the operations for handling the event
    ... ...
    # Register the customized handlers for customized events
    registered_handlers = dict() # Expected type {event: handler}
    register(customized_event, customized_handler)
    ... ...
    if customized_event == True:
        # Call the corresponding customized_handler
        registered_handlers[customized_event](args)
\end{minted}
\vspace{-0.1in}
\noindent\rule{0.474\textwidth}{0.8pt}
\vspace{-0.2in}
\caption{Behaviors description with events and handlers.}
\label{code:event_handler}
\end{figure}

\begin{example}
Consider that a server and several clients would like to construct an FL course and they agree to exchange certain model parameters during the training process.

For clients, the event related to message passing is ``{\sf receiving\_models}'', and the corresponding handler can be ``\textit{train the received global models based on local data, and then return the model updates}''.
The local training process is executed by a \textit{Trainer} object held by the client. As illustrated in Figure~\ref{code:trainer}, the trainer encapsulates the training details, entirely decoupled from the client's behaviors. Hence, the training process can be described as those of the centralized learning case, and the trainer can be flexibly extended with fancy optimizers, regularizers, personalized algorithms, etc. Such a design makes it easy for user customizations.

\begin{figure}[t]
\centering
\noindent\rule{0.474\textwidth}{0.8pt}
\begin{minted}[tabsize=2,breaklines, fontsize=\footnotesize]{python}
class Client(object):
    trainer = CustomizedTrainer(args)
    ... ...
    def handler_for_receiving_models(args):
        # Perform local training when receiving the global models
        model_update = trainer.train(args.model, args.data)
        send(message=model_update, receiver=server)
\end{minted}
\vspace{-0.1in}
\noindent\rule{0.474\textwidth}{0.4pt}\\[\dimexpr-\baselineskip+1mm+0.5pt]
\noindent\rule{0.474\textwidth}{0.4pt}
\begin{minted}[tabsize=2,breaklines, fontsize=\footnotesize]{python}
class CustomizedTrainer(Trainer):
    ... ...
    # Describe training behaviors (same as centralized training)
    def train(received_models, data):
        # Personalized algorithms might be applied here
        local_model = update_from_global_models(received_models)
        preds = local_model.forward(data.x)
        args = [optimizer, loss_function, regularizer, ...]
        model_updates = local_model.backward(data.y, preds, args)
        return model_updates
\end{minted}
\vspace{-0.1in}
\noindent\rule{0.474\textwidth}{0.8pt}
\vspace{-0.2in}
\caption{The training behaviors and clients are decoupled for supporting flexible customization.}
\label{code:trainer}
\end{figure}

For the server, the event related to message passing is ``{\sf receiving\_updates}'' and the corresponding handler can be ``\textit{save the model updates, and check the aggregation condition}'', which requires another event related to condition checking. For the synchronous training strategy, such event can be ``{\sf all\_received}'' and the corresponding handler will be ``\textit{perform federated aggregation, and broadcast the updated global models}''.
For the asynchronous training strategies, the event ``{\sf all\_received}'' can be replaced with ``{\sf goal\_achieved}'' or ``{\sf time\_up}'', which adds flexible behaviors during sampling clients or performing aggregation (More details can be found in Section~\ref{subsec:asynchronous}).
The federated aggregation is executed by an aggregator, which is also decoupled with the server for flexibly supporting various state-of-the-art (SOTA) aggregation algorithms, such as FedOpt~\cite{FedOPT2020Asad}, FedNova~\cite{FedNova2020Wang}, FedProx~\cite{blocal}, etc.

Note that when events such as  ``{\sf all\_received}'' or ``{\sf goal\_achieved}'' happens, the clients would receive the up-to-date global models after the server performs federated aggregation, which naturally causes the following event ``{\sf receiving\_models}'' and triggers the handlers for performing a new round of local training.
In this way, although we have not explicitly declared a sequential training process, the events happen in the intended logical order to trigger the corresponding handlers, which can precisely express the FL procedure and promote modularization.
Further, events such as ``{\sf maximum\_iterations\_reached}'' or ``{\sf early\_stopped}'' can be adopted to specify when the FL courses should be terminated. \hfill $\triangle$
\end{example}

With such event-driven architecture, \ours allows users to use existing or add new $<$event, handler$>$ pairs for flexible customization, rather than inserting the new behaviors into the sequential FL course carefully as those in the procedural programming paradigm.
For example, by simply changing the event ``{\sf all\_received}'' to other events related to condition checking such as ``{\sf goal\_achieved}'', users can conveniently apply asynchronous training strategies. Users also can add some new events related to message passing to enable the heterogeneous information exchange, such as node embeddings in graph federated learning~\cite{xie2021federated} and encrypted results in cross-silo federated learning~\cite{hardy2017private}.
Several representative examples from end-user perspective can be found on our website, including constructing FL courses via simple configuring~\cite{ExampleConfiguring} and developing new functionalities~\cite{ExampleStartOwnCase}.

\stitle{Trainer \& Aggregator.}
The details of the adopted algorithms in trainer and aggregator are decoupled with the behaviors of participants. Therefore, when users develop their own trainer/aggregator with \ours, they only need to care about the details of training/aggregating algorithms. For example, users are expected to implement several basic interfaces of trainers, including train, evaluation, update model, etc., which is the same as those in centralized training and serves as ``must-do'' items. For the aggregator, which takes the received messages as inputs and returns the aggregated results, users only need to implement how to aggregate.

\stitle{Programming Interfaces and Completeness Checking.}
\ours provides base classes to aware users of the necessary interfaces for an FL course, such as {\sf BaseTrainer} and in {\sf BaseWorker}. These base classes can be used to check the completeness of the defined FL courses, since an ``Not Implementation Error'' would be raised to abort the execution if users fail to implement the necessary interfaces. 
With the base classes, \ours provides rich implementation of existing FL algorithms. Therefore users can inherit the provided implementation and focus on the development of new functions and algorithms, which also ensures the completeness of FL courses.
Besides, \ours provides a completeness checking mechanism to generate a directed graph to verify the flow of message transmission in the constructed FL course (an example is illustrated in Appendix~\ref{appendix:completeness}).

\stitle{Robustness Against Malicious Participants.}
To defend malicious participants and make the system more robust, some Byzantine fault tolerance algorithms are provided in \ours. 
For example, we can apply the Krum~\cite{blanchard2017machine} aggregation rule in federated aggregation.
Note that these Byzantine fault tolerance algorithms can be regarded as the aggregation behaviors of server and implemented in the aggregator, which is decoupled with other behaviors to make it flexible and extendable for users to develop their own fault tolerance algorithms.
\section{Important Plug-In Components}
\label{sec:plugins}
In this section, we present several important plug-in components in \ours for convenient usage.
These components provide functionalities including privacy protection, attack simulation, and auto-tuning, all of which are tightly coupled with the design of \ours and serve as plug-ins.

\subsection{Behavior Plug-In: Privacy Protection}
\label{subsec:privacy_protection}
Real-world FL applications might prefer different privacy protection algorithms due to their diversity in types of private information, protection strengths, computation and communication resources, etc., which motivates us to provide various privacy protection algorithms in \ours.

With the design of \ours, privacy protection algorithms can be implemented as behavior plug-ins, which indicates that the privacy protection algorithms bring new behaviors of participants.
For example, before the participants share messages, the encryption algorithms might be applied on the messages, or the messages would be partitioned into several frames, or certain noise can be injected into the messages. 
These behaviors have been pre-defined in \ours (so-called the behavior plug-in), and can be easily called to protect privacy via simple configuration. 

\begin{figure}[t]
\centering
\noindent\rule{0.474\textwidth}{0.8pt}
\begin{minted}[tabsize=2,breaklines, fontsize=\footnotesize]{python}
class Client(object):
    def handler_for_receiving_models(args):
        ... ...
        if config.inject_noise_before_sharing == True:
            # Inject certain noise before sharing the message
            args = [noise_distribution, budget, ...]
            protected_messages = add_noise(messages, args)
            send(message=protected_messages, receiver=server)
        else:
            send(message=messages, receiver=server)
\end{minted}
\vspace{-0.1in}
\noindent\rule{0.474\textwidth}{0.8pt}
\vspace{-0.2in}
\caption{Behavior plug-in: injecting noise.}
\label{code:inject_noise}
\end{figure}

Specifically, we implement a widely-used homomorphic encryption algorithm Paillier~\cite{paillier1999public} and apply it in a cross-silo FL task~\cite{hardy2017private}; and we develop a secret sharing mechanism for FedAvg.
These provided examples demonstrate how to apply privacy protection algorithms with \ours. 
Furthermore, to satisfy the heterogeneity in privacy protection strengths, we provide tunable modules for applying Differential Privacy (DP) in FL, which has been a popular technique for privacy protection and has achieved great success in database and FL applications \cite{dp_survey, ding2011differentially, triastcyn2019federated, wei2020nbafl}.
An example is illustrated in Figure~\ref{code:inject_noise}, from which we can see that users can utilize the configuration to modify the client's behavior: injecting certain noise into the messages before sharing.
Users can combine different behaviors together to implement fancy DP algorithms such as NbAFL~\cite{wei2020nbafl}. Note that to achieve a theoretical guarantee of privacy protection, users still need to specify some necessary settings according to their own data and tasks, including the noise distribution~\cite{phan2017adaptive, dwork2014algorithmic} and privacy budget allocation~\cite{wang2019answering, li2021federated, luo2021privacy}.

\subsection{Participant Plug-In: Attack Simulation}
\label{subsec:privacy_attack}
Attacks, growing along with the development of FL, are important for users to verify the availability and the privacy protection strength of their FL systems and algorithms. Typical attacks include privacy attack and performance attack: the former aims to steal the information related to clients' private data, while the latter aims to intentionally guide the learned model to misclassify a specific subset of data for malicious purposes such as back-door.
However, most of the existing FL platforms ignore such an important functional component.

Note that it is non-trivial to provide attack simulation in an FL platform, since the diversity of privacy and performance attacks brings challenges to the platform's flexibility and extensibility.
Benefited from the design of \ours, the behaviors of malicious participants can be expressed independently, thus the attack simulation can be implemented as the participant plug-in in \ours.
To be more specific, as shown in Figure~\ref{code:attck}, users can conveniently choose some of the participants to become malicious clients via configuring, and attack algorithms can be added to their own trainers.
These malicious clients are able to collect or inject certain messages among victims, and further recover or infer the target information accordingly.
The simulated attacks provided in \ours can be used to verify the privacy protection strength of their FL systems and algorithms. For example, when users develop a new FL algorithm, they want to know the protection level of the proposed algorithm from some perspectives, such as whether the dataset properties or private training samples would be inferred by attacks. They can use several state-of-the-art attack algorithms, which have been provided in \ours for convenient usage, to check the privacy protection strength of their FL algorithms, and enhance the privacy protection strength if necessary according to the results of simulated attacks.

\ours provides rich types of attack. 
For privacy attack, \ours provides the implementation of the following algorithms:
(i) Gradient inversion attack~\cite{nasr2019comprehensiveIG} for membership inference; 
(ii) PIA~\cite{melis2019exploitingPIA} for property inference attack; 
(iii) DMU-GAN~\cite{hitaj2017deep} for class representative attack; 
(iv) DLG~\cite{zhu2019dlg}, iDLG~\cite{zhao2020idlg}, GRADINV~\cite{geiping2020inverting} for training data/label inference attack. 
In terms of performance attack, \ours currently focuses on the backdoor attack, a representative type of performance attack, whose objective is to mislead the model to classify some selected samples to the attacker-specified class. The implementations of SOTA backdoor attacks include: (i) Edge-case backdoor attacks~\cite{edge_case_bd}, BadNets~\cite{badnet}, Blended~\cite{blended}, WaNet~\cite{wanet}, NARCISSUS~\cite{narc}, which perform back-door attack by poisoning the dataset; (ii) Neurotoxin~\cite{neurotoxin} and DBA~\cite{dba}, which perform back-door attack by poisoning the model.

\begin{figure}[t]
\centering
\noindent\rule{0.474\textwidth}{0.8pt}
\begin{minted}[tabsize=2,breaklines, fontsize=\footnotesize]{python}
class Fed_Runner(object):
    ... ...
    def setup_client(config):
        if config.is_malicious == True:
            # Instantiate a malicious client with attack behavior
            client = MaliciousClient(attack_algorithms, args)
        else:
            # Instantiate a normal client
            client = Client(args)
        client.join_in_FL_course()
\end{minted}
\vspace{-0.1in}
\noindent\rule{0.474\textwidth}{0.8pt}
\vspace{-0.2in}
\caption{Participant plug-in: malicious client.}
\label{code:attck}
\end{figure}

\subsection{Manager Plug-In: Auto-tuning}
\label{subsec:auto}
FL algorithms generally expose hyperparameters that can significantly affect their performance. Without suitable configurations, users cannot manage their FL applications well. Hyperparameter optimization (HPO) methods, both traditional methods (e.g., Bayesian optimization~\cite{BO} and multi-fidelity methods~\cite{li2017hyperband, bohb, dehb, optuna}) and Federated-HPO methods~\cite{fedbo, fedex} (denoting very recent ones that deliberately take the FL setting into account) can help users manage FL applications by automatically seeking suitable hyperparameter configurations.

Therefore, in \ours, we provide an auto-tuning plug-in, which incorporates various HPO methods. Conceptually, Bayesian optimization, multi-fidelity, and Federated-HPO methods treat a complete FL course, a few FL rounds, and client-wise local update procedures as black-box functions to be evaluated, respectively. \ours provides a unified interface to manage the underlying FL procedure in various granularities so that different HPO methods can interplay with their corresponding black-box functions. This unification is nontrivial for the last case, where we leverage our event-driven architecture to achieve the client-wise exploration of Federated-HPO methods. When they are plugged in, the exchanged messages are extended with HPO-related samples/models/feedback, and the participants would handle them with extended behaviors accordingly.

\begin{figure}[t]
\centering
\noindent\rule{0.474\textwidth}{0.8pt}
\begin{minted}[tabsize=2,breaklines, fontsize=\footnotesize]{python}
class Server(object):
    def handler_for_receiving_updates(args):
        ... ...
        if config.apply_fedex == True:
            # Choose hyperparameters for the client
            cfg = sample_cfg(cfg_candidates, args.hpo_feedback)
        # Continue to handling the message accordingly
        ... ...
\end{minted}
\vspace{-0.1in}
\noindent\rule{0.474\textwidth}{0.4pt}\\[\dimexpr-\baselineskip+1mm+0.5pt]
\noindent\rule{0.474\textwidth}{0.4pt}
\begin{minted}[tabsize=2,breaklines, fontsize=\footnotesize]{python}
class Client(object):
    def handler_for_receiving_models(args):
        ... ...
        if config.apply_fedex == True:
            # Apply the received hyperparameters
            trainer.apply_cfg(args.received_config)
        # Continue to handling the message accordingly
        ... ...
\end{minted}
\vspace{-0.1in}
\noindent\rule{0.474\textwidth}{0.8pt}
\vspace{-0.2in}
\caption{Manager plug-in: re-specify configuration.}
\label{code:hpo}
\end{figure}

For Bayesian optimization methods, we showcase applying various open-sourced HPO packages to interact with \ours. Each time they propose a specific configuration, \ours executes an FL course accordingly and returns a specified metric (e.g., validation loss) as the function's output.

As for multi-fidelity methods, we have implemented Hyperband~\cite{li2017hyperband} and PBT~\cite{pbt} in \ours. Specifically, \ours can export the snapshot of a training course to a corresponding checkpoint, from which another training course can restore. With such a checkpoint mechanism, these multi-fidelity methods can evaluate the configurations that have survived previous low-fidelity comparisons by restoring from the last checkpoints rather than learning from scratch.

\begin{table*}[t]
    \centering
    \caption{The comparison between applying synchronous and asynchronous training strategies in federated learning, in terms of the virtual time cost (hours) to achieve the targeted test accuracy.  }
    \vspace{-0.05in}
    \begin{tabular}{l c c c c c c c}
    \toprule
    \multirow{2}{*}{Dataset (Target Acc.)} & \multicolumn{3}{c}{Sync.} & \multicolumn{4}{c}{Async.} \\ 
    \cmidrule(lr){2-4} \cmidrule(lr){5-8} 
     & \multicolumn{1}{c}{Vanilla} & \multicolumn{1}{c}{OS} & \multicolumn{1}{c}{OS (FedScale)} & \multicolumn{1}{c}{Goal-Aggr-Unif} & \multicolumn{1}{c}{Goal-Rece-Unif} & \multicolumn{1}{c}{Time-Aggr-Unif} & \multicolumn{1}{c}{Goal-Aggr-Group}\\
    \midrule
    FEMNIST (85\%) & $61.46$ & $27.34_{\;2.25\times}$ & $28.78_{\;2.14\times}$ & $11.29_{\;5.44\times}$ & $11.36_{\;5.41\times}$ & $11.70_{\;5.25\times}$ & $10.42_{\;5.90\times}$\\
    CIFAR-10 (70\%) & $66.99$ & $26.42_{\;2.54\times}$ & $28.98_{\;2.31\times}$& $7.73_{\;8.67\times}$ & $7.98_{\;8.39\times}$ & $8.87_{\;7.55\times}$ & $7.54_{\;8.88\times}$ \\
    Twitter (69\%) & $9.41$ & $3.84_{\;2.45\times}$ & $4.14_{\;2.27\times}$ & $0.78_{\;12.06\times}$ & $0.64_{\;14.70\times}$ & $0.50_{\;18.82\times}$ & $0.65_{\;14.48\times}$ \\
    \bottomrule
    \end{tabular}
    \label{table:main_comparison}
\end{table*}

Furthermore, \ours provides FedEx~\cite{fedex} as an exemplary implementation of Federated-HPO methods. Specifically, once FedEx is plugged in, we sample configurations for each client independently in each FL round. Then each client re-specifies its native configuration and conducts local updates accordingly, as shown in Figure~\ref{code:hpo}. Finally, the client-wise feedback is aggregated to update the policies responsible for determining the optimal configuration(s).

In summary, the auto-tuning plug-in can manage FL applications in various granularities. Traditional HPO methods interplay with \ours by configuring and running one or more complete FL rounds, while Federated-HPO methods explore client-wise configurations concurrently in a single FL round. With flexibility provided by the event-driven architecture, we have implemented these HPO methods in a unified way~\cite{wang2022fedhpo}, and novel HPO methods can be easily developed and contributed to \ours.

\section{Experiments}
\label{sec:exp}

\subsection{DataZoo and ModelZoo}
For convenient usage, we collect and preprocess ten widely-used datasets from various FL application scenarios, including computer vision datasets (FEMNIST~\cite{emnist}, CelebA~\cite{celeba} and CIFAR-10~\cite{cifar10}), natural language processing datasets (Shakespeare~\cite{mcmahan2017communication}, Twitter~\cite{twitter} and Reddit~\cite{Reddit}) from LEAF~\cite{leaf}, and graph learning datasets (DBLP~\cite{dblp_dataset}, Ciao~\cite{ciao_dataset} and MultiTask~\cite{xie2021federated}) from FederatedScope-GNN (FS-G)~\cite{wang2022federatedscopegnn}. 
The statistics of these datasets can be found in Appendix~\ref{appendix:datazoo}.
Meanwhile, we provide off-the-shelf neural network models via our ModelZoo, which includes widely-adopted model architectures, such as ConvNet~\cite{convnet} and VGG~\cite{vgg} for computer vision tasks, BERT~\cite{bert} and LSTM~\cite{lstm} for natural language processing tasks, and various GNNs~\cite{gcn, gat, gin, sage, gpr-gnn} for graph learning.
Such ModelZoo allows users to conveniently develop various trainers for clients.

\begin{figure*}
    \begin{minipage}[t]{0.33\textwidth}
    \setcaptionwidth{0.96\textwidth}
    \centering
    \includegraphics[width=0.96\textwidth]{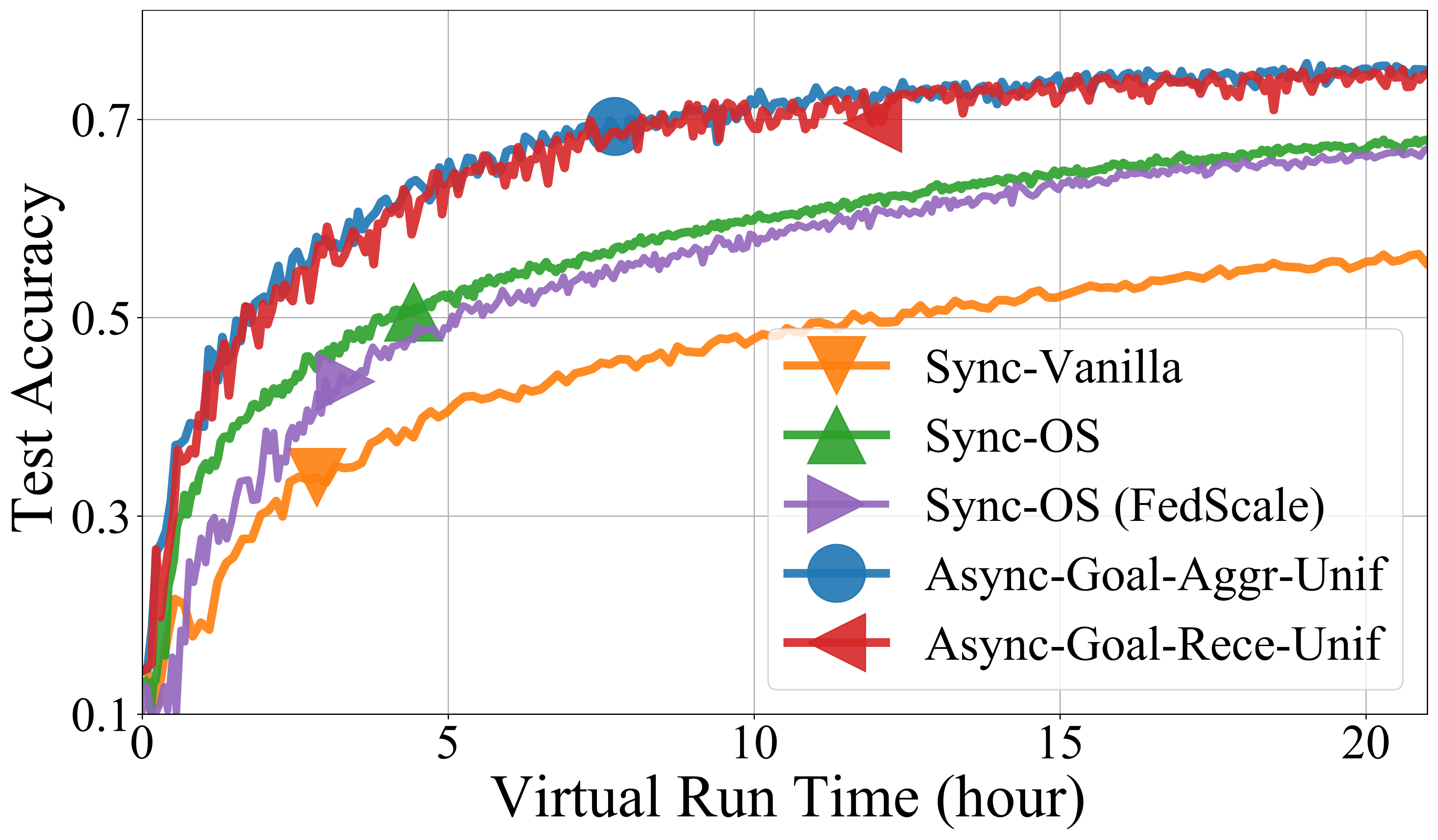}
    %\vspace{-0.05in}
    \caption{The comparison between synchronous and asynchronous strategies.}
    \label{fig:async_results}
    \end{minipage}
     \begin{minipage}[t]{0.33\textwidth}
     \setcaptionwidth{0.96\textwidth}
    \centering
    \includegraphics[width=0.96\textwidth]{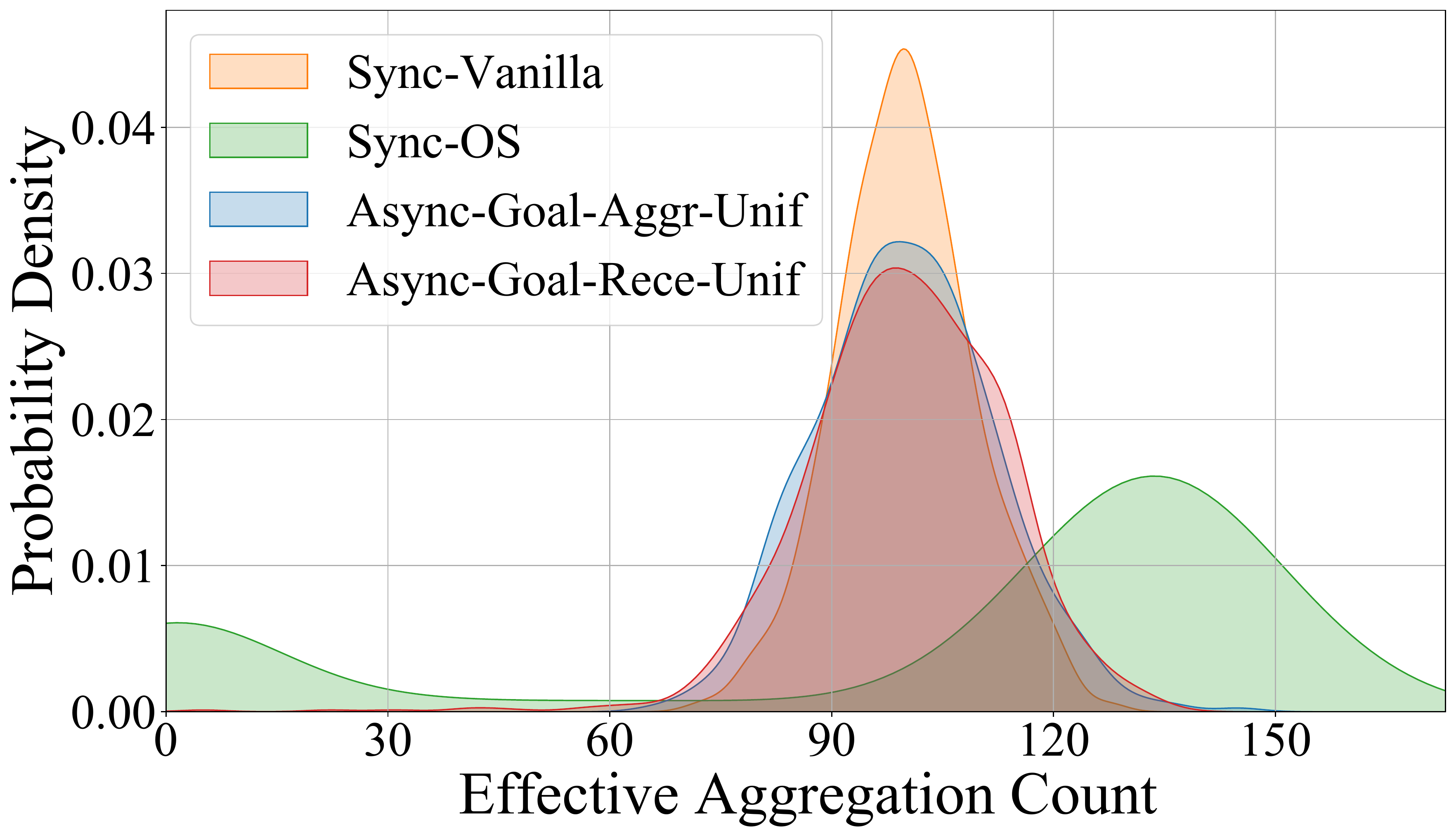}
    %\vspace{-0.05in}
    \caption{The distributions of the aggregated count of the clients.}
    \label{fig:agg_count}
    \end{minipage}
     \begin{minipage}[t]{0.33\textwidth}
     \setcaptionwidth{0.96\textwidth}
    \centering
    \includegraphics[width=0.96\textwidth]{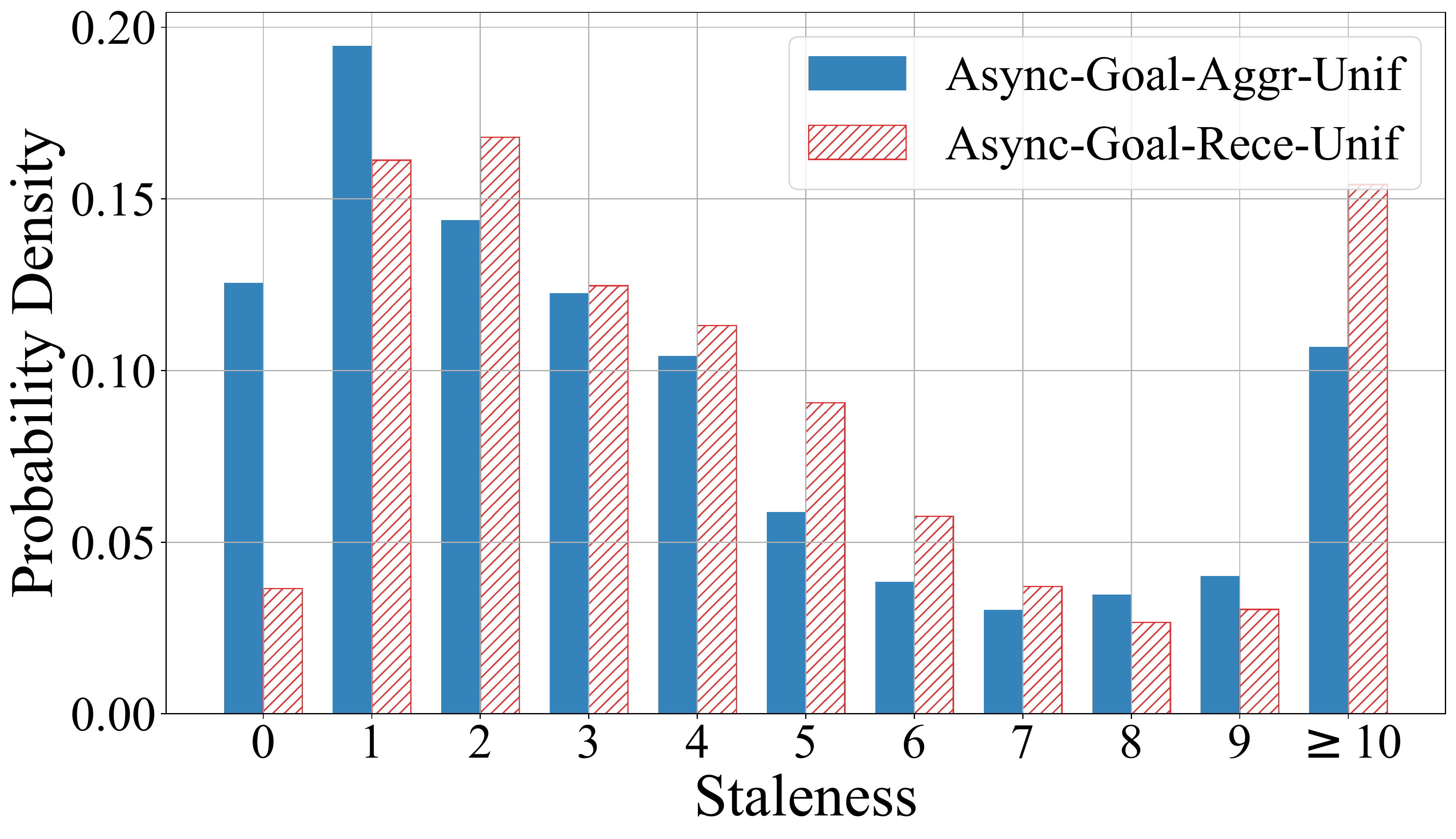}
    %\vspace{-0.05in}
    \caption{The distributions of the staleness in asynchronous strategies.}
    \label{fig:staleness}
    \end{minipage}
\end{figure*}

\subsection{Experiment Settings}
Here we conduct a series of experiments with \ours on three representative datasets as follows:

\noindent \textbf{FEMNIST}. FEMNIST consists of 805,263 handwritten digits in 62 classes, which are partitioned into 3,597 clients according to the writers. With FL, a CNN with two convolutional layers is trained for image classification task on this dataset.

\noindent \textbf{CIFAR-10}. 
As suggested by previous studies~\cite{lda}, we partition the dataset into 1,000 clients with a Dirichlet distribution, and federally train a CNN with two convolutional layers for image classification.

\noindent \textbf{Twitter}. We sample a subset from Twitter, which consists of 6,602 twitter users' 16,077 texts. Each twitter user can be regarded as a client for constructing an FL course. Following previous study~\cite{leaf}, we embed the texts with a bag-of-words model~\cite{bagofword} and collaboratively train a logistic regression model on these sampled clients for sentiment analysis.

More implementation details can be found in Appendix~\ref{appendix:implementation} and ~\ref{appendix:programming_amount}.

\subsection{Results and Analysis}
\subsubsection{Asynchronous Federated Learning}
\label{exp:async}
We first conduct experiments to compare the performance of applying synchronous and asynchronous training strategies in FL.

\textbf{Virtual Timestamp}. 
Following the best practice in prior FL works~\cite{lai2021fedscale}, we conduct the experiments by simulation while tracking the execution time with virtual timestamps. Specifically, the server begins to broadcast messages containing initial model parameters at timestamp $0$. Then each client sends updates back with a timestamp as the received one plus the execution time of local computation and communication estimated by FedScale~\cite{lai2021fedscale}. 
The server handles the received messages in the order of their timestamps and lets the next broadcast inherit the timestamp from the message that triggers it, assuming the time cost of the server is negligible. Along with an FL course, we record the performance of the global model with respect to such virtual timestamps.

\textbf{Baselines}. 
We implement FedAvg with two synchronous training strategies including \textit{Sync-vanilla} (i.e., the vanilla synchronous strategy) and \textit{Sync-OS} (i.e., the synchronous strategy with over-selection mechanism~\cite{tff}). As Sync-OS is originally proposed and implemented in FedScale~\cite{lai2021fedscale}, we also adapt it for our experiments and report its performance (denoted as \textit{Sync-OS (FedScale)}) for correctness verification.

For asynchronous FL, we instantiate different asynchronous behaviors discussed in Section~\ref{subsubsec:async_behavior}, and different strategies are named in the format of \textit{Async-AdoptedEvent-BroadcastManner-SampleStrategy}.
For example, \textit{Async-Goal-Rece-Unif} denotes that this strategy adopts the event ``{\sf goal\_achieved}'', the \textit{after receiving} broadcasting manner and the uniform sampling strategies for asynchronous FL, which can be regarded as the implementation of FedBuff~\cite{nguyen2022fedbuff}; and \textit{Async-Time-Aggr-Group} denotes we adopt the event ``{\sf time\_up}'', the \textit{after aggregating} broadcasting manner and a group sampling strategy (the client would be grouply sampled according to their responsiveness~\cite{chai2021fedat}).

\textbf{Analysis}.
We adopt the virtual time cost (hours) to achieve the targeted test accuracy as the performance metric for comparing synchronous and asynchronous FL.
The experimental results are shown in Table~\ref{table:main_comparison} (more experimental results can be found in Appendix~\ref{appendix:iid_distribution} and \ref{appendix:more_asyn_results}), from which we can observe that asynchronous training strategies achieve significant efficiency improvements (5.25$\times$\textasciitilde 18.82$\times$) compared to the vanilla synchronous training strategy on all the benchmark datasets.
Meanwhile, we plot the learning curves in Figure~\ref{fig:async_results}. Due to the space limitation, we only show some asynchronous training strategies on CIFAR-10 dataset and omit other similar results. From Figure~\ref{fig:async_results}, we can observe the existence of noticeable gaps between synchronous and asynchronous training strategies for a long time during the training process.
These experimental results are consistent with previous studies~\cite{huba2022papaya, xie2019asynchronous} and confirm that the asynchronous training strategies provided in \ours can significantly improve the training efficiency while achieving competitive model performance.

Both our implementation \textit{Sync-OS} and the original implementation in FedScale show that applying over-selection mechanism in synchronous FL can improve the efficiency to some degree. However, it might cause unfairness among participants and then lead to model bias, as demonstrated in Figure~\ref{fig:agg_count}. From the figure we can observe that when applying over-selection mechanism \textit{Sync-OS}, some clients never contribute to the federated aggregation, i.e., $\Pr[\text{effective\_aggregation\_count}=0]>0$.
The reason is that these clients need more computation or communication time, and thus their feedback would always be dropped since the server has finished the federated aggregation with the feedback from those clients having faster response speeds.
In other words, these clients always become the victims among the over-selected clients, which results in unfairness among participants, and then causes the learned models to bias towards those clients with fast response speeds.
In contrast, the asynchronous learning strategies provided in \ours can improve the efficiency without introducing such unfairness and model bias, due to the fact that staled feedback would be tolerated in the federated aggregation. Hence the distribution of effective aggregation count of asynchronous learning strategies plotted in Figure~\ref{fig:agg_count} is more concentrated and similar to that of the vanilla synchronous training strategy.

\begin{figure*}
    \begin{minipage}[t]{0.33\textwidth}
    \setcaptionwidth{0.96\textwidth}
    \centering
    \includegraphics[width=0.96\textwidth]{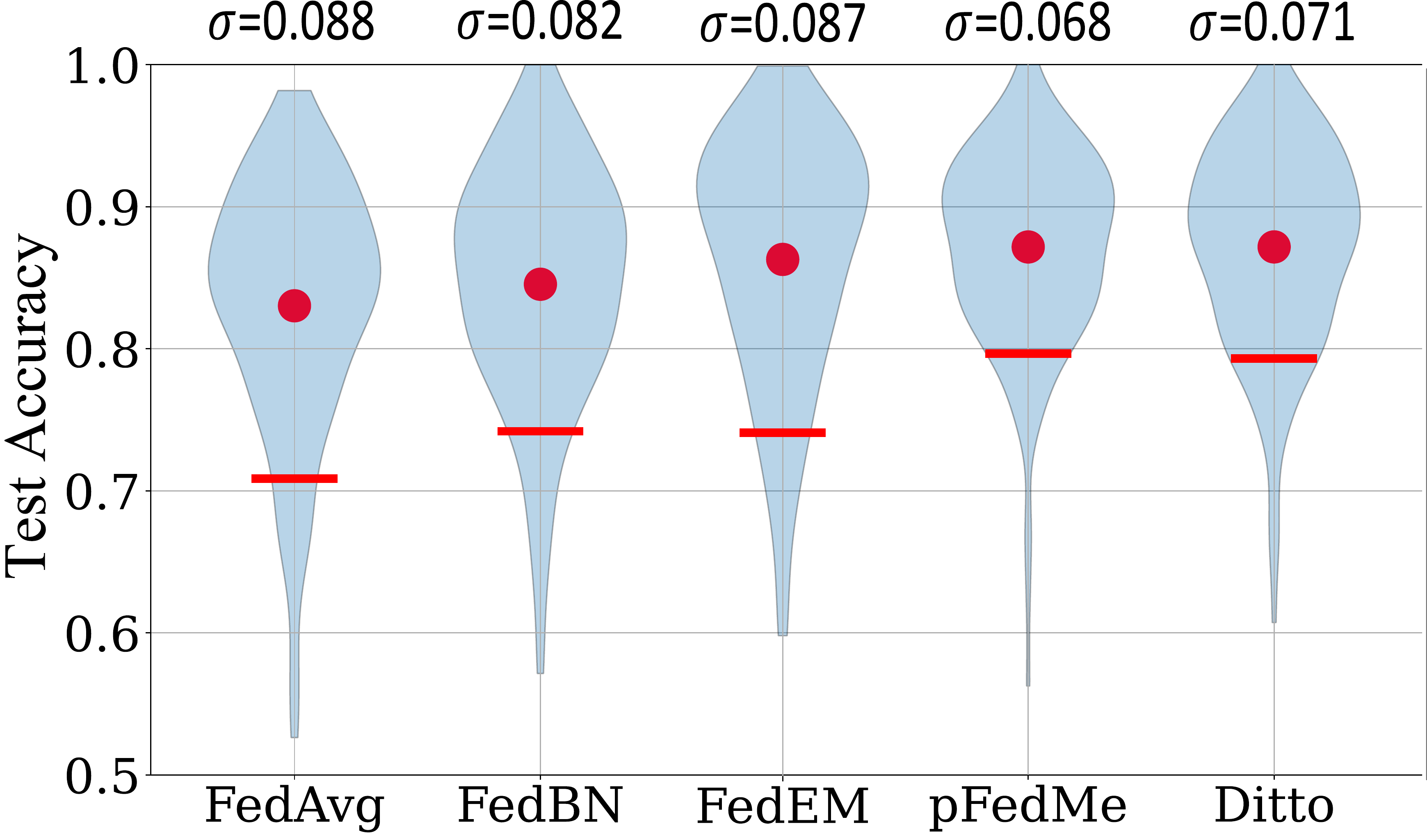}
    \vspace{-0.1in}
    \caption{Client-wise test accuracy on FEMNIST dataset.}
    \label{fig:pfl}
    \end{minipage}
     \begin{minipage}[t]{0.33\textwidth}
     \setcaptionwidth{0.96\textwidth}
    \centering
    \includegraphics[width=0.96\textwidth]{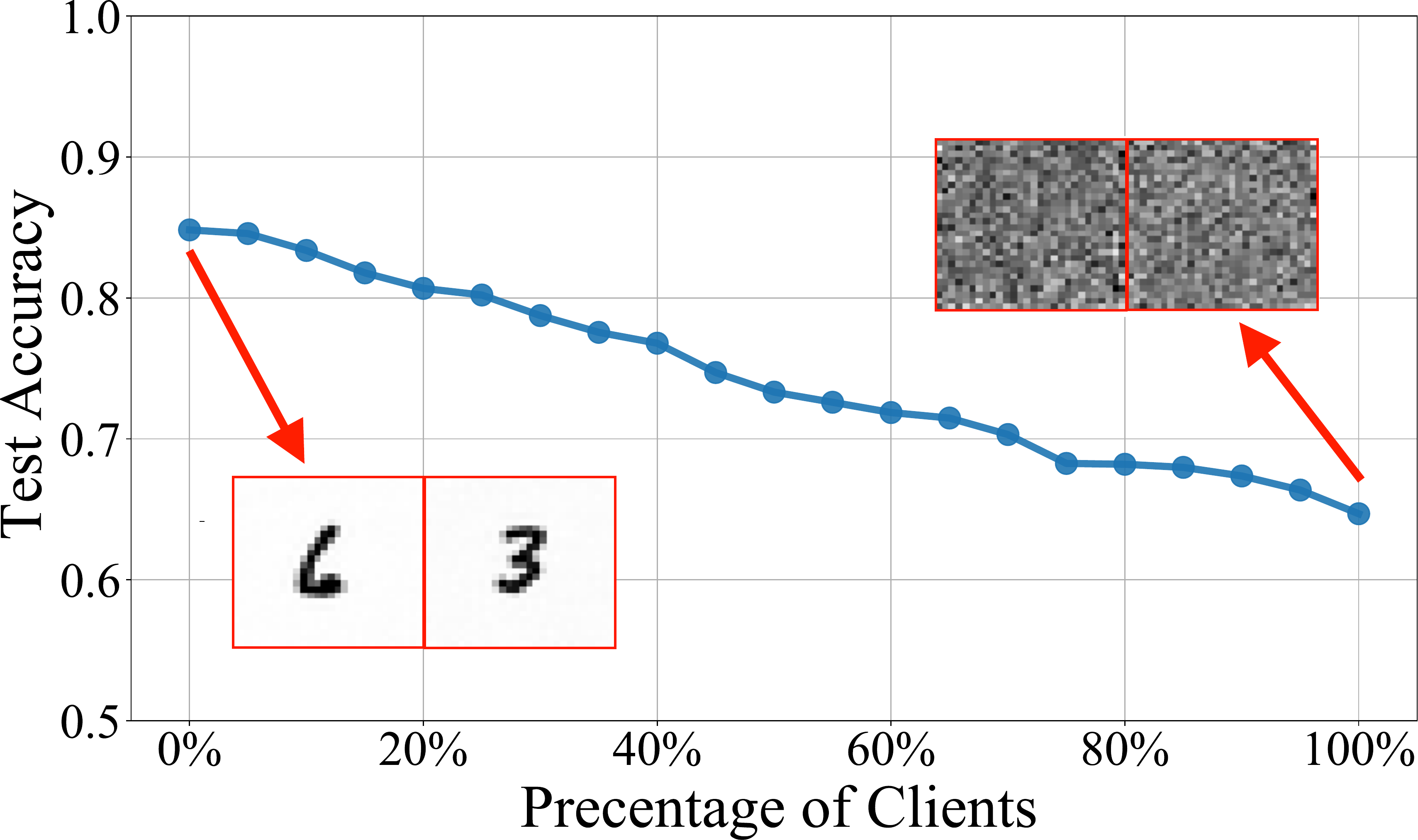}
    \vspace{-0.1in}
    \caption{Accuracy w.r.t. varying protection strength and recovered images.}
    \label{fig:dp}
    \end{minipage}
     \begin{minipage}[t]{0.33\textwidth}
     \setcaptionwidth{0.96\textwidth}
    \centering
    \includegraphics[width=0.96\textwidth]{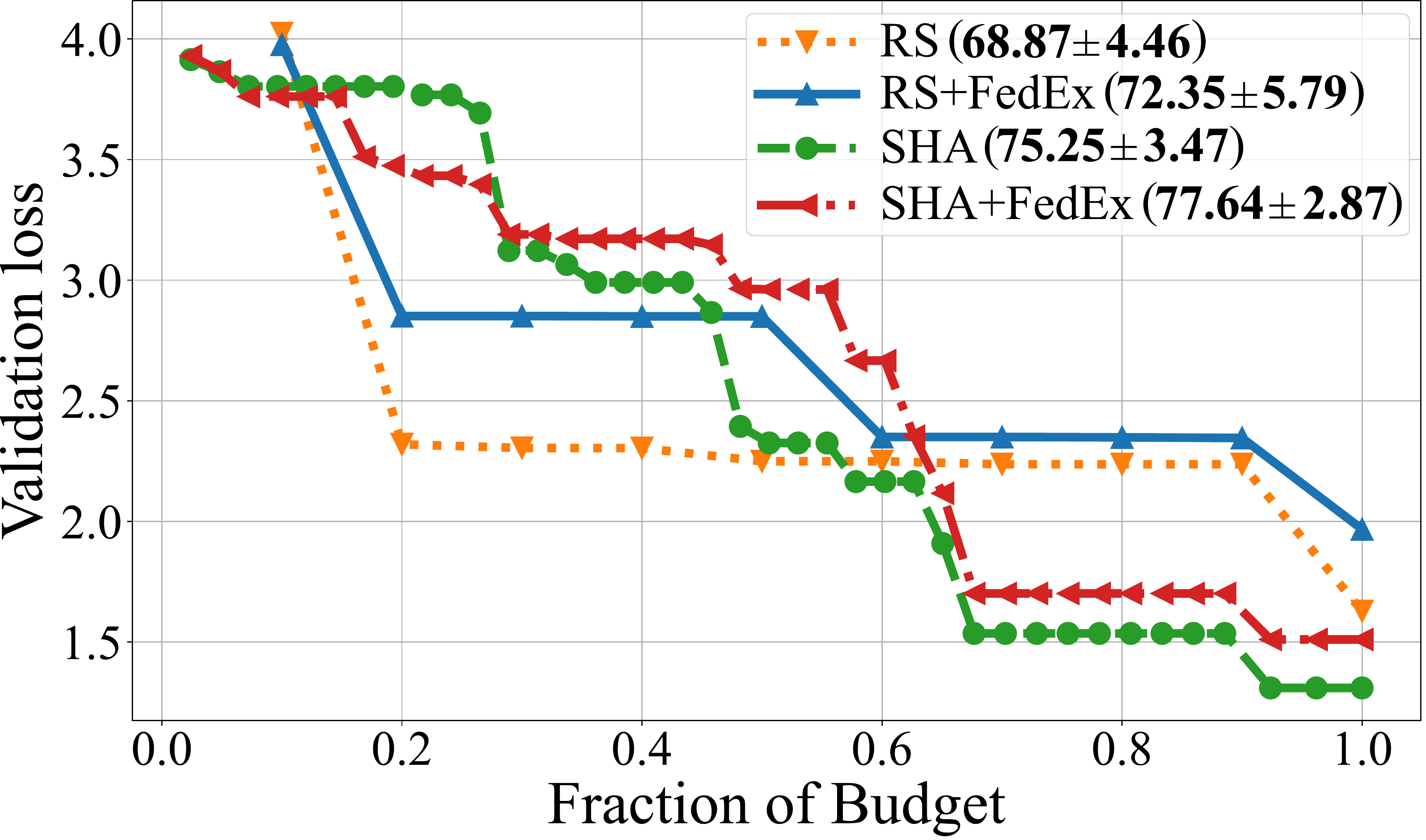}
    \vspace{-0.1in}
    \caption{Best-seen validation loss over time on FEMNIST dataset.}
    \label{fig:hpo}
    \end{minipage}
\end{figure*}

Further, in Figure~\ref{fig:staleness}, we illustrate the characteristics of different asynchronous training strategies in terms of staleness (i.e., the version difference between the up-to-date global model and the model used for local training) of the updates when performing federated aggregation. 
By comparing \textit{Async-Goal-Aggr-Unif} and \textit{Async-Goal-Rece-Unif}, we can see that \textit{after aggregating} broadcasting manner causes less staleness than \textit{after receiving}. 
It implies that \textit{after aggregating} is more suitable for those FL tasks with a low staleness toleration threshold, but such a broadcasting manner requires more available bandwidths at the server since multiple messages are sent out at the same time.

\subsubsection{Personalization}
To demonstrate how personalization can handle the heterogeneity among participants,
we compare FedAvg~\cite{mcmahan2017communication} with several built-in SOTA personalized FL algorithms, including FedBN~\cite{fedbn}, FedEM~\cite{fedem}, pFedMe~\cite{pFedME} and Ditto~\cite{li2021ditto}.

The experimental results are illustrated in Figure~\ref{fig:pfl}, which shows the client-wise test accuracies on FEMNIST dataset.
We can observe that the average accuracy (denoted as the red dots) and the 90\% quantile accuracy (denoted as the red horizontal lines) of vanilla FedAvg are both significantly lower than those of personalized FL algorithms. 
This indicates that applying personalized FL algorithms can improve the client-wise performance, also covering the bottom clients, and then lead to a better overall performance.
Besides, in terms of the standard deviation among the client-wise accuracy (shown as $\sigma$ at the top of the figure), personalized FL algorithms can reduce the performance differences to some degree, which confirms the advantages of enabling personalization behaviors for handling the heterogeneity among participants in real-world FL applications.

Personalized federated learning algorithms might need different computation and computation resources compared to vanilla FedAvg~\cite{mcmahan2017communication}.
The computation and communication costs in each training round are determined by the adopted algorithms.
Take the comparisons between vanilla FedAvg and two representative Personalized federated learning algorithms FedBN~\cite{fedbn} and Ditto~\cite{li2021ditto} as examples, in each training round~\cite{chen2022pfl}, (i) FedBN needs the same computation but fewer communication costs, since it proposes to not share the parameters of BacthNorm layer; and (ii) Ditto needs the same communication but more local computation costs, since it suggests to train the local models additionally. 
Further, from the perspective of an FL course, i.e., iteratively performing the FL training rounds until termination, the communication and computation costs depend on the convergence of learned models.

\subsubsection{Privacy Protection and Attack}
\label{sec:privacy}
We conduct an experiment to show the effect of applying privacy protection algorithms provided in \ours.
We take DP as an example, and study its effect on the utility of learned model and the effectiveness in defending against privacy attack.
Specifically, we train a ConvNet2 model on FEMNIST, and randomly choose some of the clients to inject Gaussian noise into the returned model updates to strengthen their privacy.
We construct multiple FL courses with respect to varying the percentage of clients that injects noise, changing from 0\% to 100\%, and plot the performance of the learned models in Figure~\ref{fig:dp}.
From this figure we can observe that as more and more clients choose to inject the noise into the returned model updates, the test accuracy achieved by the learned global model decreases gradually, from 84\% to 65\%, which shows the trade-off between the privacy protection strength and model utility.

Moreover, we apply the DLG algorithm~\cite{zhu2019dlg} implemented in \ours to conduct privacy attack, aiming to reconstruct private training data of other users.
As shown on the left-hand side of Figure~\ref{fig:dp}, the reconstructed images from the clients who have not injected noises are clear and the privacy attacker successfully recovers clients' training data to the extent that the groundtruth digits are exposed.
On the right-hand side of the figure, we plot the reconstructed images from those clients injecting noises, which confirms the effectiveness of the privacy protection provided by DP since the attacker fails to recover meaningful information.

\subsubsection{Auto-Tuning}
As mentioned in Section~\ref{subsec:auto}, we have implemented several HPO methods in \ours, which enables users to auto-tune hyperparameters of FL courses. Here we experimentally compare some representative HPO methods, including random search (RS)~\cite{bergstra2012random}, successive halving algorithm (SHA)~\cite{li2017hyperband} and recently proposed Federated-HPO method FedEx~\cite{fedex}, by applying them to optimize hyperparameters of FedAvg on FEMNIST dataset.
We follow the protocol used in FedHPO-B~\cite{wang2022fedhpo}, where RS and SHA try configurations one by one, and FedEx wrapped by RS/SHA manages the search procedure in a fine-grained granularity to explore hyperparameter space concurrently.

We present the results in Figure~\ref{fig:hpo}, where the best-seen validation loss is depicted, and the test accuracy of the searched optimal configuration is reported in the legend.
The best-seen validation losses of wrapped FedEx decrease slower than their corresponding wrappers, where such a poorer regret seems to indicate poorer searched hyperparameter configurations. However, their searched configurations' test accuracies are remarkably better than their wrappers, implying the superiority of managing the search procedure in a fine-grained granularity.
\section{Conclusions}
\label{sec:conclu}
In this paper, we introduce \ours, a novel federated learning platform, to provide users with great supports for various FL development and deployment.
Towards both convenient usage and flexible customization, \ours exploits an event-driven architecture to frame an FL course into $<$events, handlers$>$ pairs so that users can describe participants' behaviors from their respective perspectives.
Such an event-driven design makes \ours suitable for handling various types of heterogeneity in FL, due to the advantages that (i) \ours enables participants to exchange rich types of messages, express diverse training behaviors, and optimize different learning goals, and (ii) \ours offers rich condition checking events to support various coordinations and corporations among participants, such as different asynchronous training strategies.
Further, the design of \ours allows us to conveniently implement and provide several important plug-in components, such as privacy protection, attack simulation, and auto-tuning, which are indispensable for practical usage.
We have released \ours to help researchers and developers quickly get started, develop new FL algorithms, and build new FL applications, with the goal of promoting and accelerating the progress of FL.

%\balance
\bibliographystyle{ACM-Reference-Format}
\bibliography{FederatedScope}

\clearpage
\appendix
\section{Convergence Analysis}
\label{appendix:proof}
Without loss of generality, we assume the numbers of training instances are same among clients and simplify Eq.~\eqref{eq:loss} as the following optimization problem:
\begin{equation}
\label{eq:objective}
    \min_{\theta \in \mathbb{R}^d} F(\theta):= \frac{1}{M}\sum_{i=1}^{M} F_i(\theta),
\end{equation}
where $M$ is the number of participated clients, $F_i$ denotes the loss function of client $i$.
We use $\mathcal{S}^{(t)} \subseteq [M]$ to denote the index set of the clients that participate in the $t$-th training round. For each client $i\in\mathcal{S}^{(t)}$, it takes $Q$ local SGD steps with learning rate $\eta$. The model update at the $t$-th round is:
\begin{align}
     \theta^{(t+1)} - \theta^{(t)} & = \Delta^{(t)} \nonumber\\
    & = \frac{1}{|\mathcal{S}^{(t)}|}\sum_{i\in \mathcal{S}^{(t)}} \Delta_i^{(t)} \nonumber\\
    & = -\frac{1}{|\mathcal{S}^{(t)}|}\sum_{i\in \mathcal{S}^{(t)}} \sum_{q=1}^{Q} \eta g_i(\theta^{(t-\tau_i)}_{i,q}), \label{eq:theta_delta}
\end{align}
where $g_i(\cdot)$ denotes stochastic gradient of client $i$. 
We use $\tau_i(t)$ to denote the version staleness of initialization model that is used for local training in client $i$ at $t$-th training round. And we simplify it as $\tau_i$ when it would not cause any ambiguity.

Firstly, we given some widely-adopted assumptions~\cite{nguyen2022fedbuff, chai2021fedat, wang2021field}:

\begin{assumption}[Smoothness]
\label{assum:smoothness}
The loss function $F$ has Lipschitz continuous gradients with a constant $L>0$, i.e., $\forall \theta_1, \theta_2$:
\begin{equation}
    ||\nabla F(\theta_1) - \nabla F(\theta_2)||^2 \leq L^2 ||\theta_1 - \theta_2 ||^2,
\end{equation}
and further:
\begin{equation}
    F(\theta_1) - F(\theta_2) \leq \langle \nabla F(\theta_2), \theta_1-\theta_2 \rangle + \frac{L}{2}|| \theta_1 - \theta_2 ||^2.
\end{equation}
\end{assumption}

\begin{assumption}[Convexity]
\label{assum:convexity}
The loss function $F$ is $\mu$-strongly convex with a constant $\mu>0$, i.e., $\forall \theta_1, \theta_2$:
\begin{equation}
    F(\theta_1) - F(\theta_2) \geq \langle \nabla F(\theta_2), \theta_1-\theta_2 \rangle + \frac{\mu}{2}|| \theta_1 - \theta_2 ||^2.
\end{equation}
\end{assumption}

\begin{assumption}[Unbiasedness]
\label{assm:unbiased}
The estimation of stochastic gradient is unbiased: $\mathbb{E}[{g_i(\theta)}] = \nabla F_i(\theta)$.
\end{assumption}

\begin{assumption}[Bounded variances]
\label{assm:bounded_var}
The local and global variances are bounded for all clients, i.e., $\forall i$, there exist constants $\sigma_l$ and $\sigma_g$, s.t., 
\begin{align}
    \mathbb{E}\left[||g_i(\theta) - \nabla F_i(\theta)||^2\right] \leq \sigma_l^2,\\
    \frac{1}{M}\sum_{i=1}^{M}||\nabla F_i(\theta) - \nabla F(\theta) ||^2 \leq \sigma_g^2. 
\end{align}
\end{assumption}

\begin{assumption}[Bounded gradients]
\label{assm:bounded_grad}
The gradients of clients are bounded, i.e., $||\nabla F_i ||^2 \leq C, \forall i\in [M]$.
Specifically, $||\nabla F||^2 = ||\frac{1}{M}\sum_{i=1}^{M}\nabla F_i ||^2 \leq \frac{1}{M}\sum_{i=1}^{M}||\nabla F_i||^2 \leq C$. 
\end{assumption}

\begin{assumption}[Bounded staleness]
\label{ass:bounded_staleness}
$\forall i \in [M]$, the staleness $\tau_i$ is not larger than $\tau_{\max}$.
\end{assumption}

Based on these assumptions and inspired by previous studies~\cite{chai2021fedat, nguyen2022fedbuff}, we have two lemmas:
\begin{lemma}
\label{lemma:bound_gradient_l2_norm}
The expectation of the $L_2$ norm of the estimated gradients for all clients are bounded:
\begin{align}
    \mathbb{E}\left[||g_i(\theta^{(t)})||^2\right] \leq 3\left(\sigma_l^2 + \sigma_g^2 + C\right).
\end{align}
\end{lemma}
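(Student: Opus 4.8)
The plan is to bound the squared norm of the stochastic gradient by decomposing it into three pieces whose expectations are exactly the three quantities $\sigma_l^2$, $\sigma_g^2$, and $C$, so that the factor $3$ in the statement emerges naturally from a single Cauchy--Schwarz step. First I would add and subtract the true local gradient $\nabla F_i(\theta^{(t)})$ and the global gradient $\nabla F(\theta^{(t)})$, writing
\begin{equation}
g_i(\theta^{(t)}) = \big[g_i(\theta^{(t)}) - \nabla F_i(\theta^{(t)})\big] + \big[\nabla F_i(\theta^{(t)}) - \nabla F(\theta^{(t)})\big] + \nabla F(\theta^{(t)}).
\end{equation}
Applying the elementary inequality $\|v_1+v_2+v_3\|^2 \le 3(\|v_1\|^2+\|v_2\|^2+\|v_3\|^2)$, which follows from Cauchy--Schwarz (equivalently, from convexity of $\|\cdot\|^2$), turns this identity into a clean upper bound given by a sum of three squared norms.

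Next I would take expectations and control each term separately. The first term is precisely the local stochastic-gradient variance, so the local part of Assumption~\ref{assm:bounded_var} caps its expectation by $\sigma_l^2$. The third term is the squared norm of the global gradient, which the bounded-gradient Assumption~\ref{assm:bounded_grad} controls by $C$ through $\|\nabla F\|^2 \le \frac{1}{M}\sum_i \|\nabla F_i\|^2 \le C$. Summing the three contributions then yields the claimed bound $3(\sigma_l^2 + \sigma_g^2 + C)$.

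The one delicate point is the middle term $\|\nabla F_i(\theta^{(t)}) - \nabla F(\theta^{(t)})\|^2$: the global-variance part of Assumption~\ref{assm:bounded_var} bounds only the client-average $\frac{1}{M}\sum_i \|\nabla F_i - \nabla F\|^2 \le \sigma_g^2$, not the deviation of any single fixed client. I would resolve this either by reading the expectation in the statement as also taken over the uniform client sampling, so that $\mathbb{E}_i\|\nabla F_i - \nabla F\|^2 = \frac{1}{M}\sum_i \|\nabla F_i - \nabla F\|^2 \le \sigma_g^2$, or by adopting the standard per-client form of bounded heterogeneity $\|\nabla F_i - \nabla F\|^2 \le \sigma_g^2$ for every $i$. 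Under either reading the bound closes, and this interpretive choice, rather than any nontrivial estimate, is the main thing to pin down; every other step is a direct invocation of the stated assumptions.
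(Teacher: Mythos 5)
Your proposal is correct and follows essentially the same route as the paper: the identical three-term decomposition $g_i = (g_i - \nabla F_i) + (\nabla F_i - \nabla F) + \nabla F$, the inequality $\|v_1+v_2+v_3\|^2 \le 3(\|v_1\|^2+\|v_2\|^2+\|v_3\|^2)$, and term-by-term appeal to Assumptions~\ref{assm:bounded_var} and~\ref{assm:bounded_grad}. Your remark about the middle term is well taken --- the paper's Assumption~\ref{assm:bounded_var} bounds only the client-averaged deviation $\frac{1}{M}\sum_i\|\nabla F_i - \nabla F\|^2 \le \sigma_g^2$ yet the proof silently applies it to a single client $i$, so one of the two readings you propose (expectation over uniform client sampling, or a per-client heterogeneity bound) is indeed needed to make the step airtight, a point the paper itself glosses over.
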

\begin{proof}
\begin{align}
     &\thinspace\mathbb{E}\left[||g_i(\theta^{(t)})||^2\right] \nonumber\\
     = &\thinspace\mathbb{E}\left[|| g_i(\theta^{(t)}) - \nabla F_i(\theta^{(t)}) +  \nabla F_i(\theta^{(t)}) - \nabla F(\theta^{(t)}) + \nabla F(\theta^{(t)}) ||^2 \right] \nonumber\\
     \leq& \thinspace3\mathbb{E}\left[||g_i(\theta^{(t)}) - \nabla F_i(\theta^{(t)})||^2 +  ||\nabla F_i(\theta^{(t)}) - \nabla F(\theta^{(t)})||^2 \right. \nonumber\\
     & \quad \:\left.+ ||\nabla F(\theta^{(t)})||^2 \right] \nonumber\\
     =& \thinspace 3\left(\sigma_l^2 + \sigma_g^2 + C\right),
\end{align}
where the last equality follows Assumption~\ref{assm:bounded_var} and~\ref{assm:bounded_grad}.
\end{proof}

\begin{lemma}
\label{lemma:bound_delta_l2_norm}
The expectation of $L_2$ norm of $\Delta^{(t)}, \forall t \in [T]$, is bounded:
\begin{align}
    \mathbb{E}\left[||\Delta^{(t)}||^2\right] \leq 3 Q^2 \eta^2 \left(\sigma_l^2 + \sigma_g^2 + C\right),
\end{align}
where $Q$ denotes the local SGD steps and $\eta$ denotes the learning rate.
\end{lemma}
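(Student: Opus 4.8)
The plan is to substitute the explicit expression for $\Delta^{(t)}$ from Equation~\eqref{eq:theta_delta}, factor out the common learning rate $\eta$, and then bound the squared norm of the resulting average of stochastic gradients by a single application of a convexity (power-mean / Cauchy--Schwarz) inequality, followed by the per-term bound already established in Lemma~\ref{lemma:bound_gradient_l2_norm}. Since Lemma~\ref{lemma:bound_gradient_l2_norm} controls $\mathbb{E}[\|g_i(\cdot)\|^2]$ uniformly over all clients, staleness indices, and local steps, the staleness $\tau_i$ plays no role here beyond indexing which iterate each gradient is evaluated at, so no assumption on staleness is needed for this lemma.

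First I would write, abbreviating $S := |\mathcal{S}^{(t)}|$,
\begin{equation}
\Delta^{(t)} = -\frac{\eta}{S}\sum_{i\in\mathcal{S}^{(t)}}\sum_{q=1}^{Q} g_i\bigl(\theta^{(t-\tau_i)}_{i,q}\bigr),
\end{equation}
so that $\|\Delta^{(t)}\|^2$ equals $\eta^2$ times the squared norm of a sum of $SQ$ stochastic-gradient terms scaled by $1/S$. The key step is then to control the squared norm of this sum: invoking $\|\sum_{k=1}^{n} v_k\|^2 \le n\sum_{k=1}^{n}\|v_k\|^2$ with $n = SQ$ yields
\begin{equation}
\Bigl\|\frac{1}{S}\sum_{i\in\mathcal{S}^{(t)}}\sum_{q=1}^{Q} g_i\bigl(\theta^{(t-\tau_i)}_{i,q}\bigr)\Bigr\|^2 \le \frac{Q}{S}\sum_{i\in\mathcal{S}^{(t)}}\sum_{q=1}^{Q}\bigl\|g_i\bigl(\theta^{(t-\tau_i)}_{i,q}\bigr)\bigr\|^2,
\end{equation}
where one factor of $Q$ emerges from the number of summands while the remaining normalization cancels against the $1/S^2$ produced by squaring the $1/S$ prefactor.

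Finally I would take expectations and apply Lemma~\ref{lemma:bound_gradient_l2_norm} term by term, bounding each $\mathbb{E}[\|g_i(\cdot)\|^2]$ by $3(\sigma_l^2+\sigma_g^2+C)$. As the double sum contains exactly $SQ$ terms, this gives
\begin{equation}
\mathbb{E}\bigl[\|\Delta^{(t)}\|^2\bigr] \le \eta^2 \cdot \frac{Q}{S}\cdot SQ\cdot 3\bigl(\sigma_l^2+\sigma_g^2+C\bigr) = 3Q^2\eta^2\bigl(\sigma_l^2+\sigma_g^2+C\bigr),
\end{equation}
which is the claimed bound. This lemma is essentially a direct corollary of Lemma~\ref{lemma:bound_gradient_l2_norm}, so there is no genuine conceptual obstacle; the only care required is the bookkeeping of the normalization constants, namely ensuring that the power-mean inequality is applied with $n = SQ$ (rather than, say, $n = Q$ within each client) so that the correct $Q^2$ dependence appears and the dependence on the random subset size $S$ cancels cleanly. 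That constant-tracking is the step most prone to an off-by-a-factor slip, but it is otherwise routine.
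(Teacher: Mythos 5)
Your proposal is correct and follows essentially the same route as the paper's proof: substitute Equation~\eqref{eq:theta_delta}, apply the inequality $\|\sum_{k=1}^{n}v_k\|^2\le n\sum_{k=1}^{n}\|v_k\|^2$ over all $|\mathcal{S}^{(t)}|Q$ summands, take expectations, and invoke Lemma~\ref{lemma:bound_gradient_l2_norm} term by term. Your constant bookkeeping matches the paper's final bound (and is in fact cleaner than the paper's intermediate display, which drops a power of $\eta$ in a typo), and your observation that the staleness plays no role here is likewise consistent with the paper.
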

\begin{proof}
According to Eq.~\eqref{eq:theta_delta}, we have:
\begin{align}
    ||\Delta^{(t)}||^2 
    & = \frac{1}{|\mathcal{S}^{(t)}|^2} \Big|\Big|\sum_{i\in \mathcal{S}^{(t)}} \sum_{q=1}^{Q}\eta g_i(\theta_{i,q}^{t-\tau_i}) \Big|\Big|^2 \nonumber\\
    & \leq \frac{Q\eta}{|\mathcal{S}^{(t)}|} \sum_{i\in \mathcal{S}^{(t)}} \sum_{q=1}^{Q} \big|\big| g_i(\theta_{i,q}^{t-\tau_i}) \big|\big|^2.
\end{align}
Take the expectation over the randomness w.r.t. the client participation and the stochastic gradients:
\begin{align}
    \mathbb{E}\big[||\Delta^{(t)}||^2\big] 
    & \leq \frac{Q \eta^2}{|\mathcal{S}^{(t)}|} \sum_{i\in \mathcal{S}^{(t)}} \sum_{q=1}^{Q}  \mathbb{E}\left[|| g_i(\theta_{i,q}^{t-\tau_i}) ||^2\right] \nonumber\\
    & \leq 3 Q^2 \eta^2 \left(\sigma_l^2 + \sigma_g^2 + C\right),
\end{align}
where the last inequality follows from Lemma~\ref{lemma:bound_gradient_l2_norm}
\end{proof}

Next we provide the convergence analysis for the proposed asynchronous training protocol in federated learning, inspired by previous studies~\cite{chai2021fedat, nguyen2022fedbuff, wang2021field}.
With Eq.~\eqref{eq:theta_delta} and the assumption of smoothness (Assumption~\ref{assum:smoothness}), we have:
\begin{align}
    &F(\theta^{(t+1)}) - F(\theta^{(t)}) \nonumber \\
    &\leq \langle \nabla F(\theta^{(t)}), \theta^{(t+1)} - \theta^{(t)} \rangle + \frac{L}{2} || \theta^{(t+1)} - \theta^{(t)} ||^2 \nonumber \\
    &= \langle \nabla F(\theta^{(t)}), \Delta^{(t)} \rangle + \frac{L}{2} || \Delta^{(t)} ||^2.
\end{align}
Take the total expectation:
\begin{align}
    & \quad \mathbb{E}\big[F(\theta^{(t+1)}) - F(\theta^{(t)})\big] \nonumber \\
    & \leq \mathbb{E}\big[\langle \nabla F(\theta^{(t)}), \Delta^{(t)} \rangle\big] + \frac{L}{2} \mathbb{E}\big[|| \Delta^{(t)} ||^2\big] \nonumber \\
    & \leq \mathbb{E}\big[\underbrace{\langle \nabla F(\theta^{(t)}), \Delta^{(t)} \rangle}_{H_1} \big] + \frac{3LQ^2 \eta^2}{2} \left(\sigma_l^2+\sigma_g^2 + C\right) \quad (\text{With Lemma}~\ref{lemma:bound_delta_l2_norm}). 
    \label{eq:object}
\end{align}
Consider the bound of $H_1$:
\begin{align}
    H_1 &= \langle \nabla F(\theta^{(t)}), -\frac{1}{|\mathcal{S}^{(t)}|}\sum_{i\in \mathcal{S}^{(t)}}\sum_{q=1}^{Q}\eta g(\theta_{i,q}^{(t-\tau_i)}) \rangle \nonumber\\
    & = -\frac{1}{|\mathcal{S}_t|}\sum_{i\in \mathcal{S}^{(t)}}\sum_{q=1}^{Q}\eta \langle \nabla F(\theta^{(t)}), g(\theta_{i,q}^{(t-\tau_i)}) \rangle.
\end{align}
We take the total expectation $\mathbb{E}[\cdot]:=\mathbb{E}_{\mathcal{F}}\mathbb{E}_{i\sim [M]}\mathbb{E}_{g_i|i,\mathcal{F}}[\cdot]$ ($\mathcal{F}$ denotes the historical information):
\begin{align}
    \mathbb{E}[H_1] &= -\mathbb{E}_{\mathcal{F}}\left[\frac{1}{M}\sum_{i=1}^{ M}\sum_{q=1}^{Q} \eta \mathbb{E}_{g_i|i\sim [M]}\big[\langle \nabla F(\theta^{(t)}), g(\theta_{i,q}^{(t-\tau_i)}) \rangle \big]  \right] \nonumber \\
    &= -\mathbb{E}_{\mathcal{F}}\left[\sum_{q=1}^{Q}\eta \langle\nabla F(\theta^{(t)}), \frac{1}{M}\sum_{i=1}^{M}\nabla F_i(\theta_{i,q}^{(t-\tau_i)}) \rangle  \right].
\end{align}
Given $\langle a,b \rangle = \frac{1}{2}(||a||^2 + ||b||^2 - ||a-b||^2)$, we have:
\begin{align}
    \mathbb{E}[H_1] &=  \sum_{q=1}^{Q}\frac{\eta}{2} \mathbb{E}_{\mathcal{F}}\Big[ -||\nabla F(\theta^{(t)})||^2 - ||\frac{1}{M}\sum_{i=1}^{M} \nabla F_i(\theta_{i,q}^{(t-\tau_i)})||^2  \nonumber \\
    &\quad  + ||\nabla F(\theta^{(t)})-\frac{1}{M}\sum_{i=1}^{M} \nabla F_i(\theta_{i,q}^{(t-\tau_i)})||^2 \Big] \nonumber \\
    & \leq -\mu Q \eta\mathbb{E}_{\mathcal{F}}\left[ F(\theta^{(t)}) - F(\theta^{(*)})\right] \nonumber\\
    & \quad + \sum_{q=1}^{Q}\frac{\eta}{2}\mathbb{E}_{\mathcal{F}}\Big[\underbrace{||\nabla F(\theta^{(t)}) - \frac{1}{M}\sum_{i=1}^{M} \nabla F_i(\theta_{i,q}^{(t-\tau_i)})||^2}_{H_2} \Big],
    \label{eq:h1}
\end{align}
where the last inequality follows from Assumption~\ref{assum:convexity} and $\theta^{(*)}$ denotes the optimum of minimizing $F(\cdot)$.
Further, with Eq.~\eqref{eq:objective}, we have:
\begin{align}
    H_2 &= \Big|\Big|\frac{1}{M}\sum_{i=1}^{M}\nabla F_i(\theta^{(t)}) - \frac{1}{M}\sum_{i=1}^{M} \nabla F_i(\theta_{i,q}^{(t-\tau_i)}) \Big|\Big|^2 \nonumber\\
    & \leq \frac{1}{M}\sum_{i=1}^{M} \big|\big| \nabla F_i(\theta^{(t)}) - \nabla F_i(\theta_{i,q}^{(t-\tau_i)}) \big|\big|^2 \nonumber\\
    & \leq \frac{2}{M}\sum_{i=1}^{M} \Big[\big|\big| \nabla F_i(\theta^{(t)}) - \nabla F_i(\theta^{(t-\tau_i)}) \big|\big|^2  + \big|\big| \nabla F_i(\theta^{(t-\tau_i)}) - \nabla F_i(\theta_{i,q}^{(t-\tau_i)}) \big|\big|^2\Big] \nonumber\\
    & \leq \frac{2L^2}{M}\sum_{i=1}^{M} \left[\big|\big| \theta^{(t)} - \theta^{(t-\tau_i)} \big|\big|^2 + \big|\big| \theta^{(t-\tau_i)} - \theta_{i,q}^{(t-\tau_i)} \big|\big|^2 \right].
\end{align}
Take the expectation, we have:
\begin{align}
    \mathbb{E}[H_2] &\leq \frac{2L^2}{M}\sum_{i=1}^{M} \Big[ \mathbb{E}\big[\big|\big|\theta^{(t)} - \theta^{(t-\tau_i)}\big|\big|^2\big] + \mathbb{E}\big[\big|\big|\theta^{(t-\tau_i)} - \theta_{i,q}^{(t-\tau_i)}\big|\big|^2\big] \Big] \nonumber\\
    & \leq \frac{2L^2}{M}\sum_{i=1}^{M} \Big[ \mathbb{E}\big[\big|\big|\sum_{\rho=t-\tau_i}^{t-1}\frac{1}{|\mathcal{S}_{\rho}|}\sum_{i\in \mathcal{S}_{\rho}}\sum_{q=1}^{Q}\eta g_i(\theta_{i,q}^{(\rho)})\big|\big|^2 \big] \nonumber\\
    & \qquad + \mathbb{E}\big[\big|\big| \sum_{q=1}^{Q}\eta g_i(\theta_{i,q}^{(t-\tau_i)}) \big|\big|^2\big] \Big] \nonumber\\
    & \leq \frac{2L^2}{M}\sum_{i=1}^{M} \Big[ Q \eta^2 \tau_i \sum_{\rho=t-\tau_i}^{t-1}\frac{1}{|\mathcal{S}_{\rho}|}\sum_{i\in \mathcal{S}_{\rho}} \sum_{q=1}^{Q} \mathbb{E} \big[||g_i(\theta_{i,q}^{(\rho)})||^2\big] \nonumber\\
    & \qquad + Q\eta^2\sum_{q=1}^{Q} \mathbb{E}\big[||g_i(\theta_{i,q}^{(t-\tau_i)})||^2\big] \Big] \nonumber\\
    & \leq \frac{2L^2}{M}\sum_{i=1}^{M} \Big[ 3Q^2\eta^2\tau_{\max}^2(\sigma_l^2 + \sigma_g^2 + C) + 3Q^2\eta^2(\sigma_l^2 + \sigma_g^2 + C)\Big] \nonumber\\
    & = 6L^2Q^2\eta^2(\tau_{\max}^2+1)(\sigma_l^2 + \sigma_g^2 + C),
    \label{eq:h2}
\end{align}
where the last inequality follows from Lemma~\ref{lemma:bound_gradient_l2_norm} and Assumption~\ref{ass:bounded_staleness}.
By inserting Eq.~\eqref{eq:h2} and Eq.~\eqref{eq:h1} into Eq.~\eqref{eq:object}, we have:
\begin{align}
    &\quad \mathbb{E}\left[F(\theta^{(t+1)}) - F(\theta^{(t)})\right] \nonumber\\
    &\leq -\mu Q \eta\mathbb{E}\left[ F(\theta^{(t)}) - F(\theta^{(*)})\right] \nonumber\\
    &\quad + 3LQ^2\eta^2 \left(\sigma_l^2 + \sigma_g^2 + C\right)\left[\eta Q L (\tau_{\max}^2 +1)+\frac{1}{2}\right].
\end{align}
By rearranging, we have:
\begin{align}
    &\quad \mathbb{E}\big[F(\theta^{(t+1)}) - F(\theta^{(*)})\big] - \frac{3LQ\eta}{\mu} (\sigma_l^2 + \sigma_g^2 + C)[\eta Q L (\tau_{\max}^2 +1)+\frac{1}{2}] \nonumber\\
    &\leq (1-\mu Q \eta)\Big[\mathbb{E}[ F(\theta^{(t)}) - F(\theta^{(*)})]  \nonumber\\
    & \quad - \frac{3LQ\eta}{\mu} (\sigma_l^2 + \sigma_g^2 + C)[\eta Q L (\tau_{\max}^2 +1)+\frac{1}{2}]\Big],
\end{align}
which implies a geometric series with ratio $1-\mu Q \eta$. Thus we can obtain:
\begin{align}
    &\quad \mathbb{E}\big[F(\theta^{(T)}) - F(\theta^{(*)})\big] \nonumber\\
    &\leq (1-\mu Q \eta)^T\mathbb{E}\big[ F(\theta^{(0)}) - F(\theta^{(*)})\big] \nonumber\\
    & \quad +\left[1-(1-\mu Q \eta)^T\right] \frac{3LQ\eta}{\mu} \left(\sigma_l^2 + \sigma_g^2 + C\right)\left[\eta Q L (\tau_{\max}^2 +1)+\frac{1}{2}\right].
\end{align}
When $\mu Q \eta <=1$, we have the following model convergence conclusion:
\begin{align}
    &\quad \mathbb{E}\big[F(\theta^{(T)}) - F(\theta^{(*)})\big] \nonumber\\
    &\leq \left(1-\mu Q \eta\right)^T\mathbb{E}\big[ F(\theta^{(0)}) - F(\theta^{(*)})\big] \nonumber\\
    & \quad +\frac{3LQ\eta}{\mu} \left(\sigma_l^2 + \sigma_g^2 + C\right)\left[\eta Q L (\tau_{\max}^2 +1)+\frac{1}{2}\right].
\end{align}

\begin{table*}[t]
    \centering
    \caption{Examples of events in FederatedScope.}
    \vspace{-0.1in}
    \begin{tabular}{l l l}
        \toprule
         Category & Event & Event Description \\
         \midrule
        \multirow{5}{*}{Related to Message Passing} & {\sf receiving\_join\_in} & The server receives join-in requirements from clients. \\
        & {\sf receiving\_model} & Clients receive the global model from the server. \\
        & {\sf receiving\_updates} & The server receives model updates from clients. \\
        & {\sf receiving\_eval\_request} & Clients receive the request of evaluation from the server. \\
        & ... ... & ... ... \\
        \midrule
        \multirow{5}{*}{Related to Condition Checking} & {\sf all\_received} & All the model updates have been received.\\
        & {\sf time\_up} & The allocated time budget for the training round has run out. \\
        & {\sf early\_stop} & The pre-defined early stop conditions are satisfied. \\
        & {\sf performance\_drop} & The received global model causes a performance drop. \\
        & ... ... & ... ... \\
        \bottomrule
    \end{tabular}
    \label{tab:events}
\end{table*}

\begin{table*}[t]
    \centering
    \caption{The statistics of the datasets provided in DataZoo.}
    \vspace{-0.1in}
    \begin{tabular}{l c c c c c}
        \toprule
        Dataset & Task & Number of Instance &  Number of Clients \\
        \midrule
        FEMNIST & Image Classification & 817,851 & 3,597 \\
        CelebA & Image Classification & 200,288 & 9,323 \\
        CIFAR-10 & Image Classification & 60,000 & 1,000 \\
        \midrule
        Shakespeare &  Next Character Prediction & 4,226,158 & 1,129 \\
        Twitter & Sentiment Analysis & 1,600,498 & 660,120 \\
        Reddit & Language Modeling & 56,587,343 & 1,660,820\\
        \midrule
        DBLP (partitioned by venue) & Node Classification & 52,202 & 20\\
        DBLP (partitioned by publisher) & Node Classification & 52,202 & 8\\
        Ciao & Link Classification & 565,300 & 28 \\
        MultiTask & Graph Classification & 18,661 & 7\\
         \bottomrule
    \end{tabular}
    \label{table:datazoo}
\end{table*}

\section{Examples of events}
\label{appendix:events}
Some examples of events provided in \ours are presented in Table~\ref{tab:events}.
These events and the corresponding handlers are used to describe participants' behaviors in \ours, which is introduced in Section~\ref{subsec:event_driven}.

\section{DataZoo}
\label{appendix:datazoo}
The statistics of the datasets provided in DataZoo are summarized in Table~\ref{table:datazoo}. Our DataZoo contains ten widely-used datasets collected from various FL applications and standardizes the data preprocessing.

\section{Overall Structure}
\label{appendix:overvall_architecture}
The overall structure is illustrated in Figure~\ref{fig:overall}.
\begin{figure}[t]
    \centering
    \includegraphics[width=0.45\textwidth]{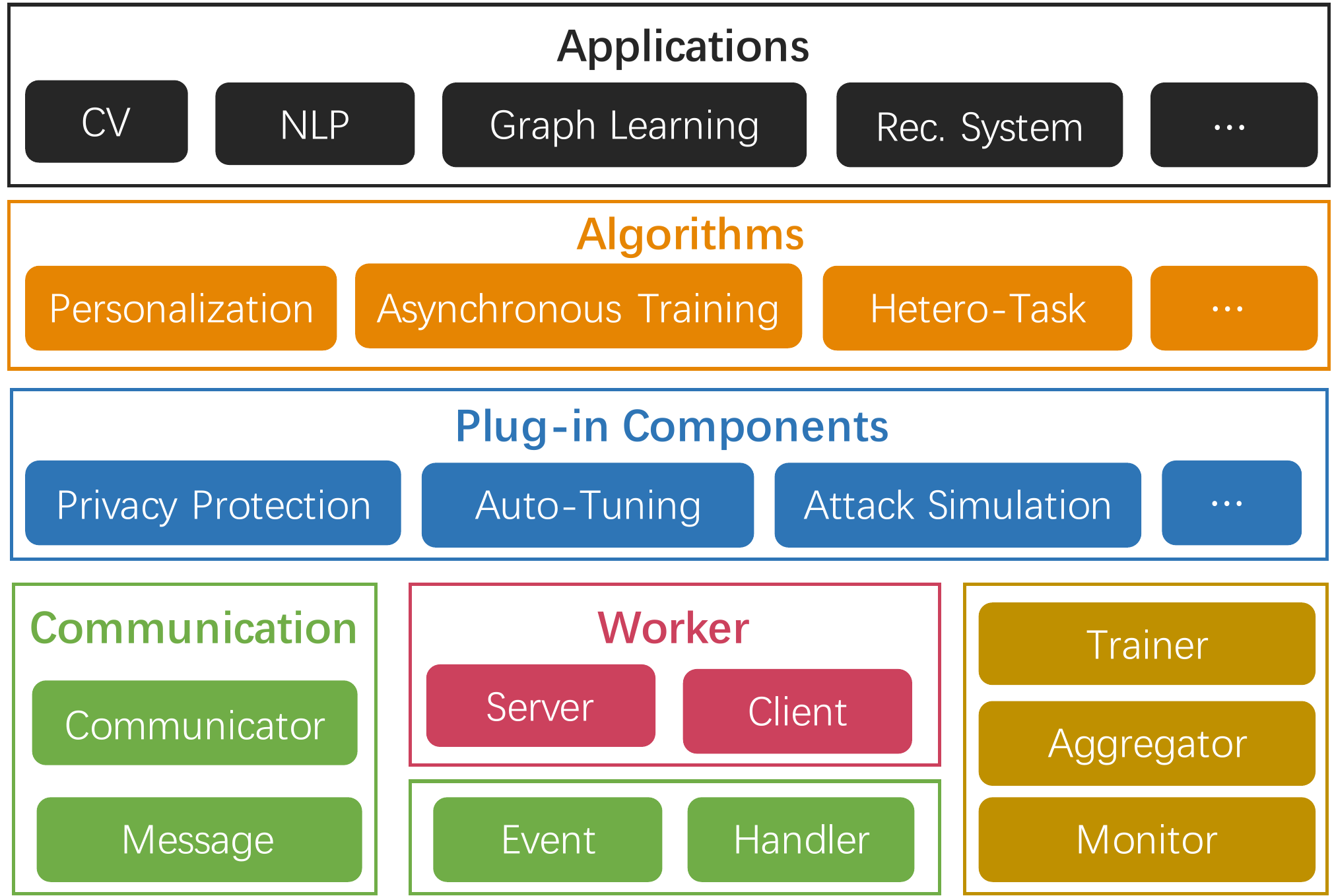}
    \vspace{-0.1in}
    \caption{The overall structure of \ours.}
    \label{fig:overall}
\end{figure}

\section{Completeness Checking}
\label{appendix:completeness}
As shown in Figure~\ref{fig:completeness}, \ours provides a completeness checking to verify the flow of message transmission in the constructed FL courses. A complete FL course should contain at least one path from the ``start'' node to the ``termination'' node.

\begin{figure*}[t]
    \centering
    \includegraphics[width=0.85\textwidth]{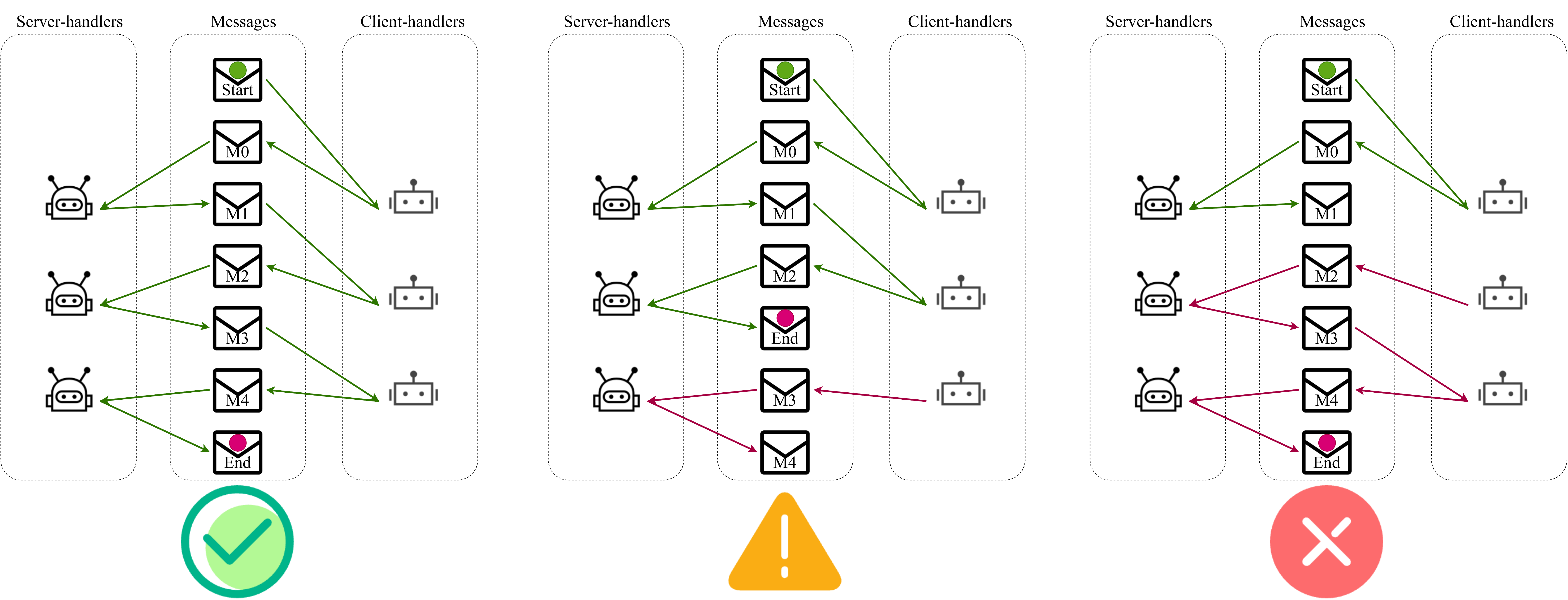}
    \caption{The graphs to verify the flow of message transmission in a constructed FL course. In  the left and middle subgraphs, there exists at least one path from the ``start'' node to the ``termination'' node, which indicates the FL courses are complete. In the middle subgraph, the nodes that are unreachable from the start node, .e.g, ``M3" and ``M4", are redundant, and \ours would provide warnings for these redundant nodes. The subgraph on the right side denotes an incomplete FL course since no start-to-end path, and \ours would raise an error in the completeness checking to help users. }
    \label{fig:completeness}
\end{figure*}

\section{Implementation Details}
\label{appendix:implementation}
We conduct the experiments show in Section~\ref{sec:exp} on 8 GeForce RTX 2080Ti GPUs.

For the experiments on FEMNIST and CIFAR-10, we train a CNN with two convolutional layers. 
We set the hidden size to 2048 and the dropout ratio to 0.5 to prevent overfitting. At each round of training, the clients execute 4 SGD steps with a batch size of 20, and the learning rate is set to 0.5. The concurrency number (i.e., the number of clients that perform local training at each training round) is 100.
For the experiments on Twitter, we set the embedding size to 300 and the maximum length to 140 to embed the texts with a bag-of-words model, and train a logistic regression model. At each round of training, the clients execute 4 SGD steps with a batch size of 2, and the learning rate is set to 0.05. The concurrency number is set to 200.

For all the implemented synchronous and asynchronous strategies, we tune the hyperparameter on the validation set.
For Sync-OS, at each training round, we over-select 30\% more clients upon the corresponding concurrency number, i.e., 130 on FEMNIST and CIFAR-10, and 260 on Twitter. For asynchronous settings, we set the aggregation goal to 40, 20, and 40 for the experiments conducted on FEMNIST, CIFAR-10, and Twitter, respectively. And the threshold of staleness toleration is set to 20 for all the datasets. For the asynchronous strategies equipped with "{\sf time\_up}", the time budget of each training round is set to the same value as the averaged time cost for achieving the defined aggregation goal when using "{\sf goal\_achieved}".

\section{datasets with IID distribution versus non-IID distribution}
\label{appendix:iid_distribution}
We split CIFAR-10 into 100 clients, following the uniform distribution and the Dirichlet distribution (the Dirichlet factors $\alpha$ are set to 1.0, 0.5, and 0.2, a smaller factor value implies a higher heterogeneous degree) to synthesize distributed datasets with IID distribution and non-IID distribution, respectively. Then we federally train a CNN with two convolutional layers, adopting vanilla FedAvg~\cite{mcmahan2017communication} and two representative personalized federated learning algorithms, i.e., FedBN~\cite{fedbn} and Ditto~\cite{li2021ditto}. The experimental results are demonstrated in Table~\ref{tab:iid_vs_noniid}.

\begin{table}[t]
    \centering
    \caption{Experimental results (accuracy) on CIFAR-10 with IID distribution and non-IID distributions.}
    \begin{tabular}{c c c c c}
    \toprule
    \multirow{2}{*}{Methods} & \multirow{2}{*}{IID Distribution} & \multicolumn{3}{c}{Non-IID Distribution}\\
    \cmidrule(lr){3-5}
    &  & \multicolumn{1}{c}{$\alpha=1.0$} & \multicolumn{1}{c}{$\alpha=0.5$} & \multicolumn{1}{c}{$\alpha=0.2$} \\
    \midrule
    FedAvg & 0.8049 & 0..7929 & 0.7905 & 0.7700 \\
    FedBN & 0.7908 & 0.8106 & 0.8311 & 0.8817  \\
    Ditto & 0.7708 & 0.8087 & 0.8278 & 0.8840 \\
    \bottomrule
    \end{tabular}
    \label{tab:iid_vs_noniid}
\end{table}

From the experimental results we can observe that, although vanilla FedAvg can achieve competitive performance on CIFAR-10 with IID distribution, it cannot perform well on non-IID distribution datasets and has larger performance drops when the heterogeneity degree increases. The heterogeneity in clients' data, which is widespread in FL applications, could lead to sub-optimal performance when participants learn a simple global model as what they do in distributed machine learning. Therefore, to handle such heterogeneity, clients are encouraged to apply client-specific training based on their private data, and share parts of the global model while locally maintaining others. As a result, FedBN and Ditto obtain noticeable improvement on datasets with non-IID distributions compared to FedAvg, which indicates their effectiveness in handling such heterogeneity of FL.

\begin{figure*}
    \begin{minipage}[t]{0.33\textwidth}
    \centering
    \includegraphics[width=0.96\textwidth]{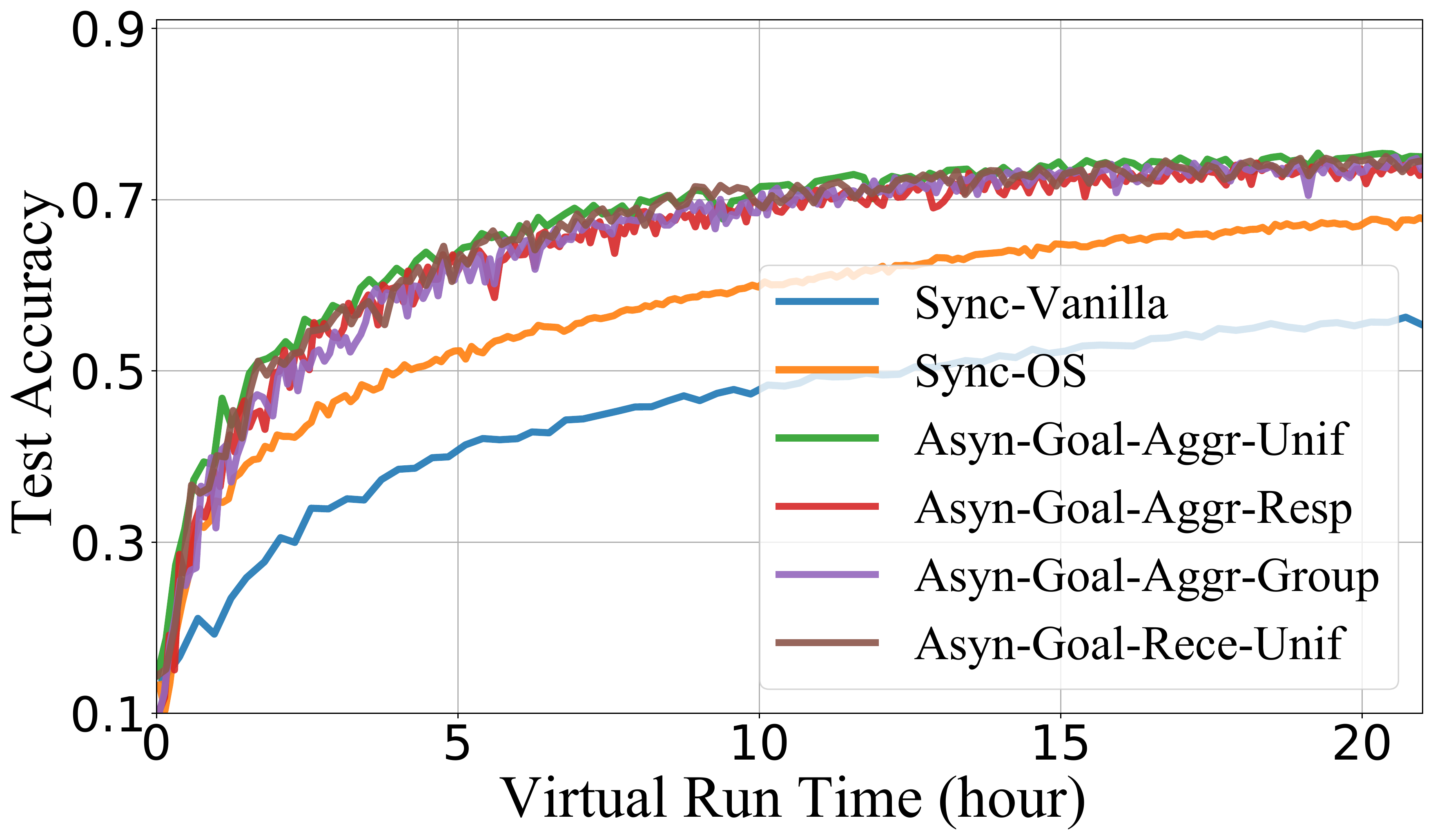}
    \vspace{-0.05in}
    \end{minipage}
     \begin{minipage}[t]{0.33\textwidth}
    \centering
    \includegraphics[width=0.96\textwidth]{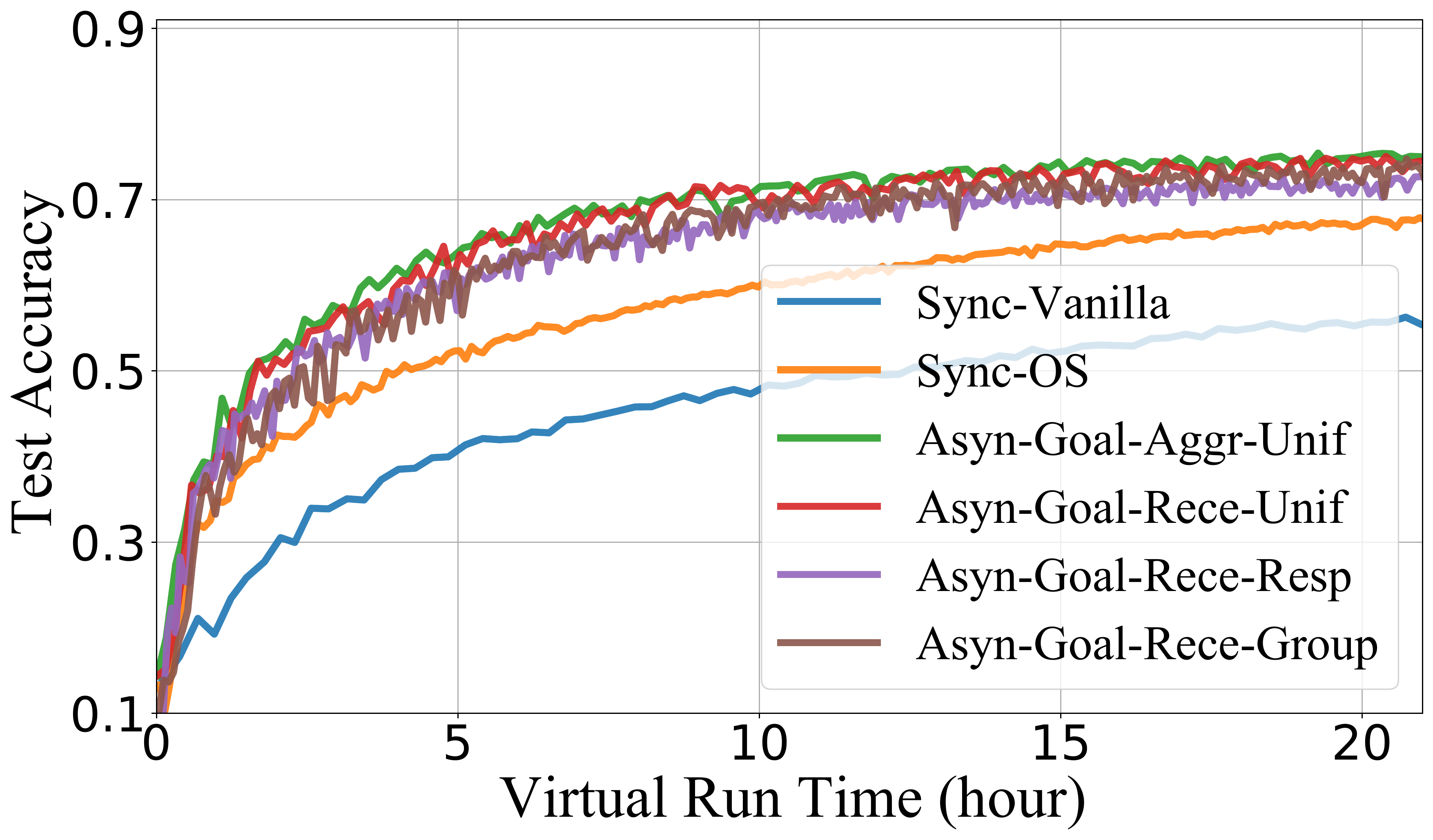}
    \vspace{-0.05in}
    \end{minipage}
     \begin{minipage}[t]{0.33\textwidth}
    \centering
    \includegraphics[width=0.96\textwidth]{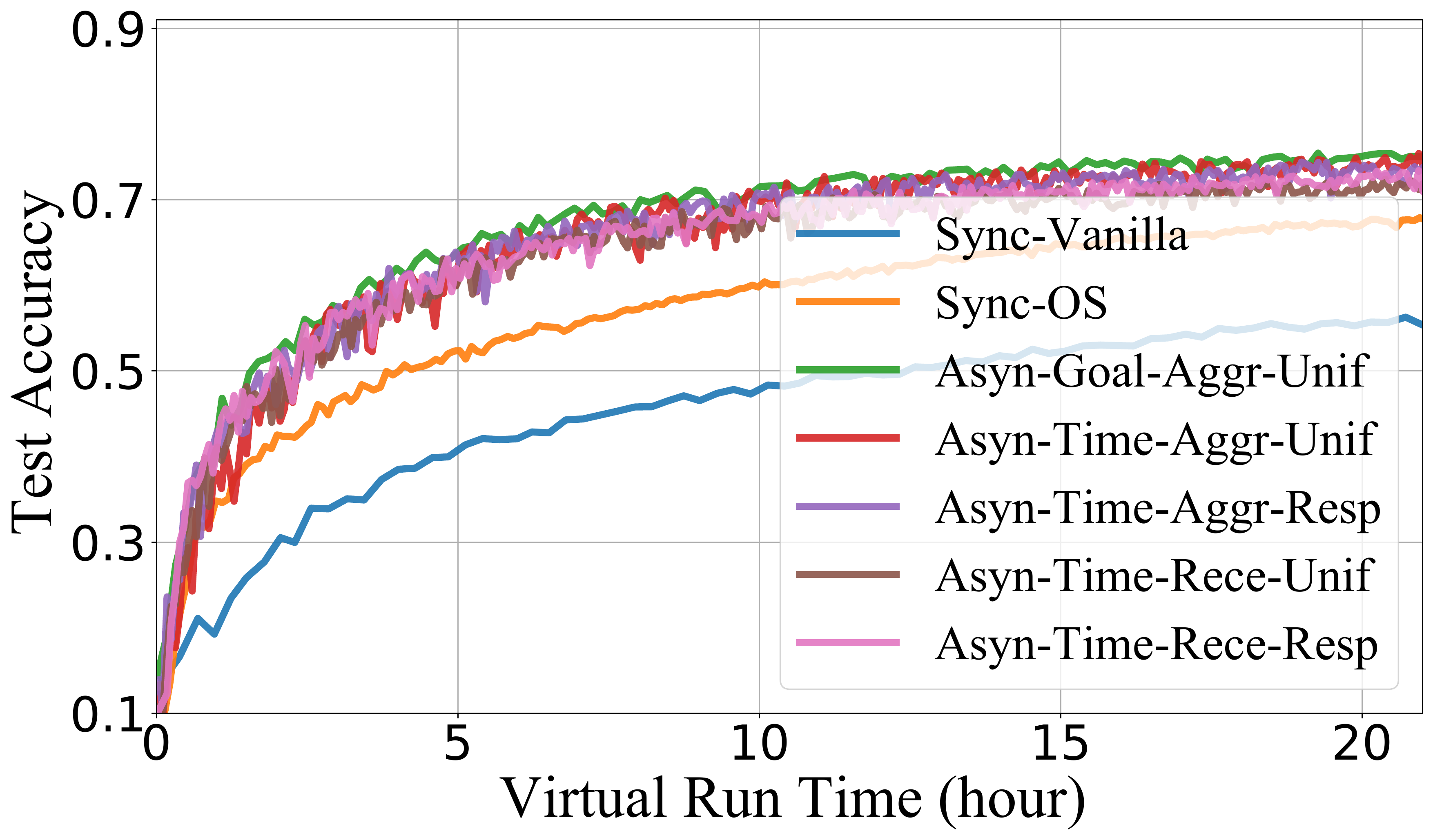}
    \vspace{-0.05in}
    \end{minipage}
    \caption{The experimental results of different asynchronous strategies provided in \ours.\label{fig:revision_asyn_results}}
\end{figure*}

\begin{figure*}
    \begin{minipage}[t]{0.33\textwidth}
    \setcaptionwidth{0.96\textwidth}
    \centering
    \includegraphics[width=0.96\textwidth]{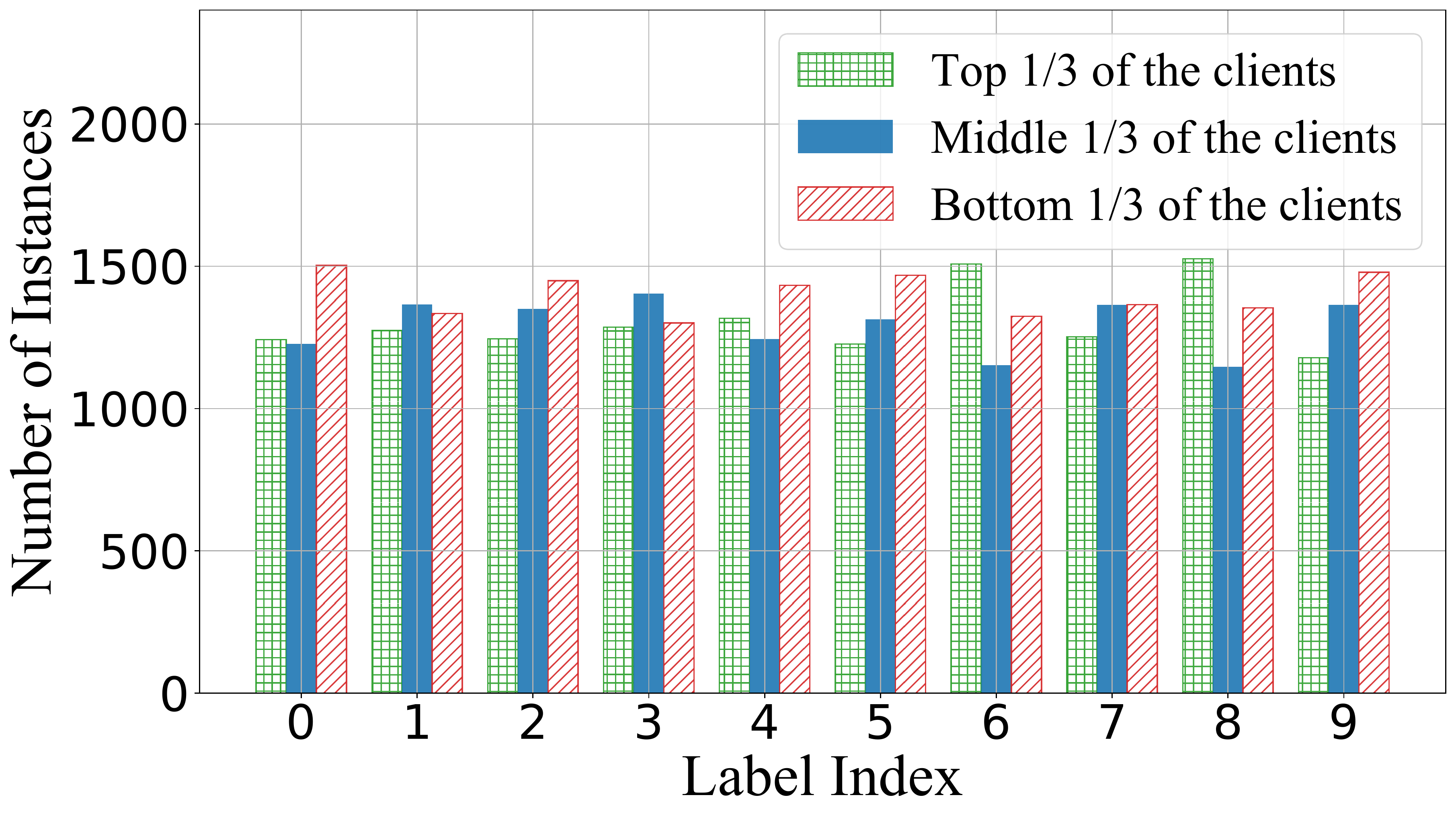}
    \vspace{-0.05in}
    \caption{The data distributions among different clustered clients (top clients response faster than bottom clients) in CIFAR-10.}
    \label{fig:label_distribution}
    \end{minipage}
     \begin{minipage}[t]{0.33\textwidth}
    \setcaptionwidth{0.96\textwidth}
    \centering
    \includegraphics[width=0.96\textwidth]{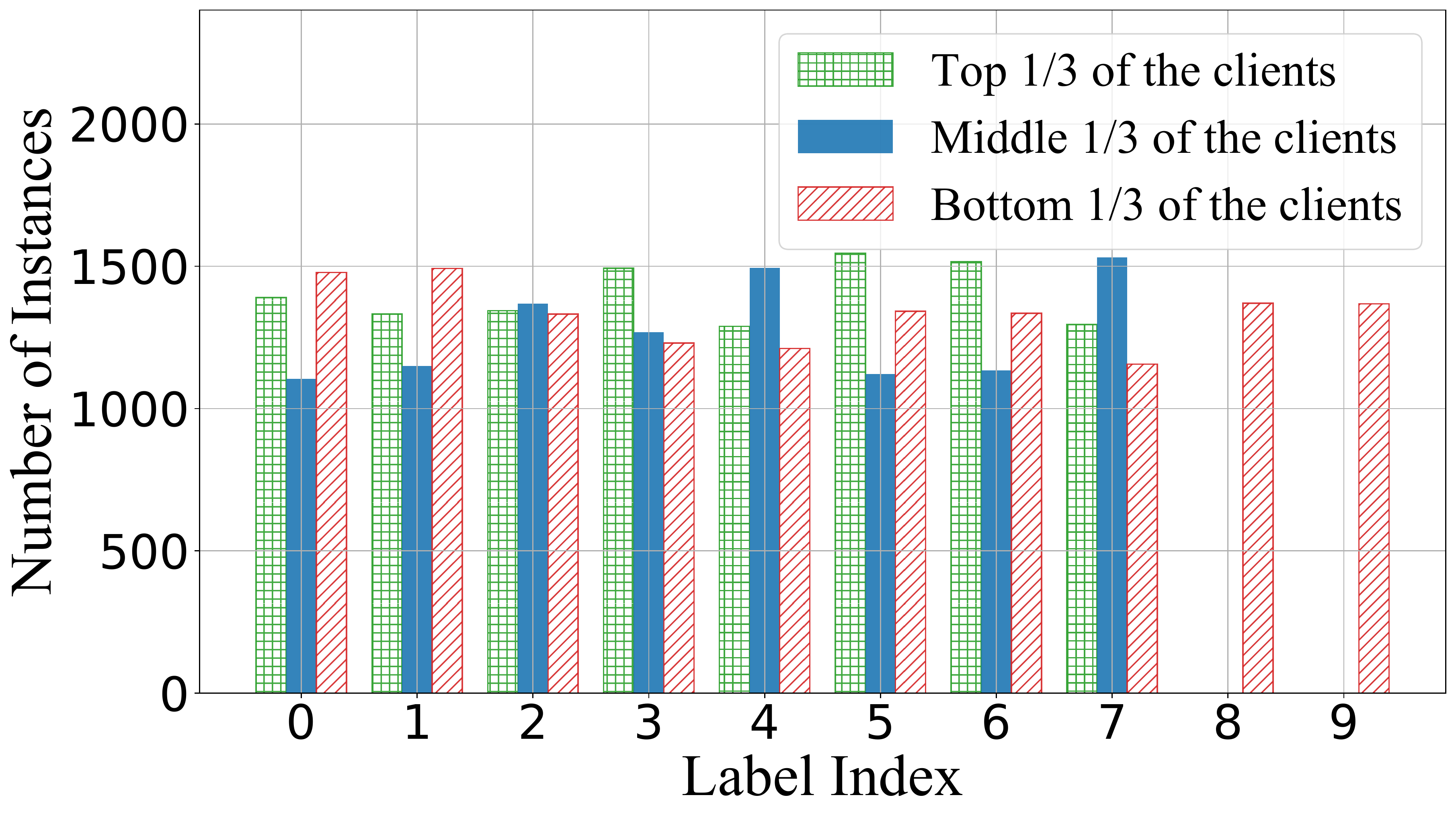}
    \vspace{-0.05in}
    \caption{The data distributions among different clustered clients (top clients response faster than bottom clients) in bias-CIFAR-10.}
    \label{fig:label_distribution_bias}
    \end{minipage}
     \begin{minipage}[t]{0.33\textwidth}
     \setcaptionwidth{0.96\textwidth}
    \centering
    \includegraphics[width=0.96\textwidth]{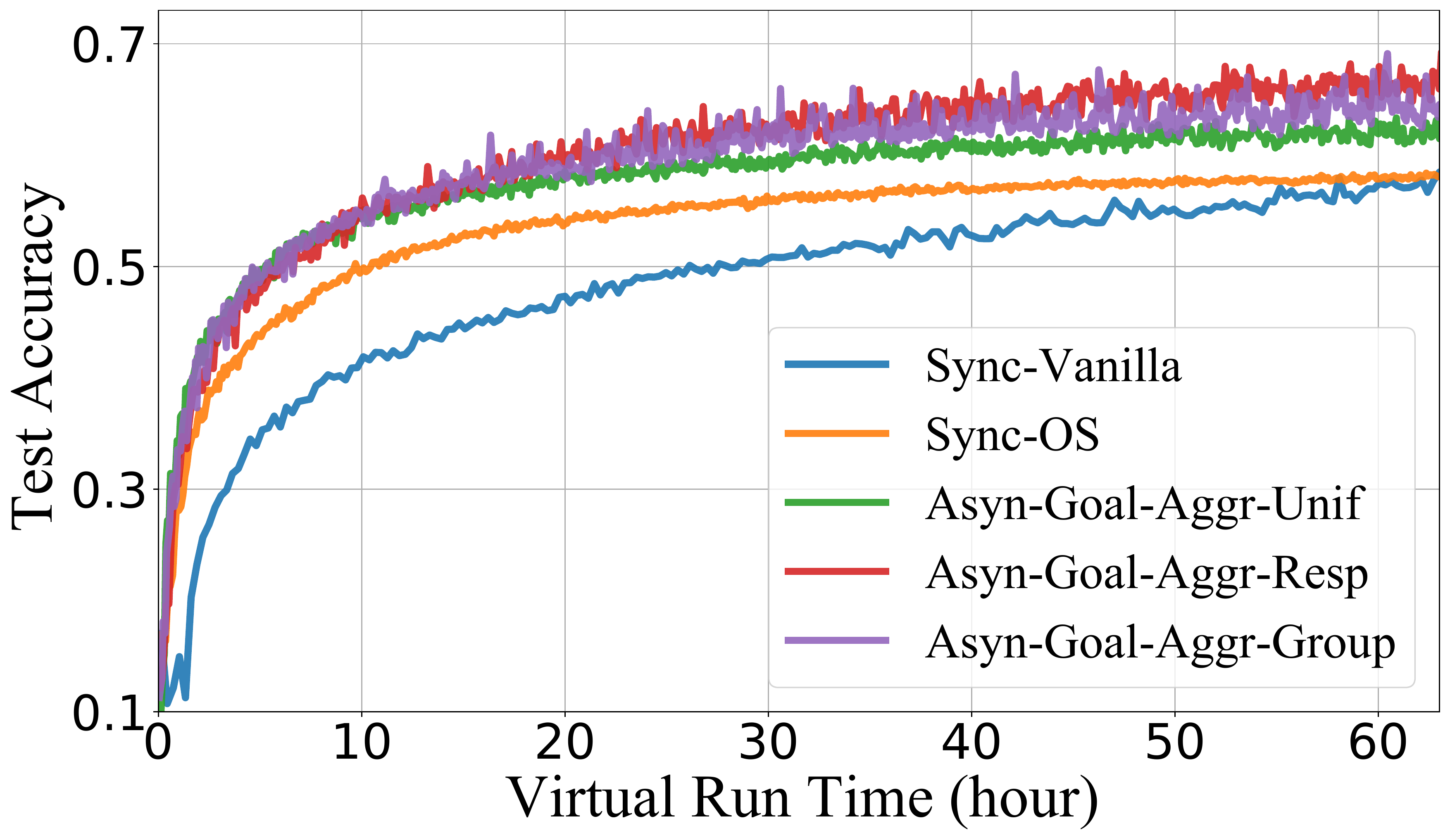}
    \vspace{-0.05in}
    \caption{The experimental results on bias-CIFAR-10.}
    \label{fig:asyn_bias}
    \end{minipage}
\end{figure*}

\section{example of amount of programming}
\label{appendix:programming_amount}
For the users who construct FL courses based on the built-in functionalities, they can adopt provided data, models, and  plug-in operations via simply configuring, or using their own data or models without changing the implementation of the federated behaviors in the provided FL courses. For example, excluding the codes for dataset preprocessing, users only need to modify two lines of code for adopting a customized dataset: one line for registering the dataset class and one line for changing the dataset name in the configuration. 

For the users who aim to do further customization, they can inherit from clients or servers (e.g., from the vanilla FedAvg) and add new $<$event, handler$>$ pairs to express the new behaviors of servers and clients accordingly. For example, to implement the personalized federated algorithms FedBN and Ditto based on the vanilla FedAvg, users only need to add/modify 19 and around 100 lines of code respectively. 

\section{experiments and analysis on asynchronous training strategies}
\label{appendix:more_asyn_results}

We add more experimental results of the asynchronous training strategies provided in \ours, as shown in Figure~\ref{fig:revision_asyn_results}. From the figures we can observe that,  asynchronous training strategies can achieve better performance compared to synchronous training strategies.

Note that different asynchronous training strategies might have different characteristics. For example, {\it after aggregating} broadcasting manner causes less staleness than {\it after receiving} (as shown in Figure~\ref{fig:staleness}), but {\it after aggregating} requires more available bandwidths at the server since multiple messages are sent out at the same time. These different characteristics can guide users to choose the more suitable asynchronous training strategies for their own application, considering model performance, system resource, staleness toleration, etc.

Generally speaking, there is no conclusion on which provided sampling strategy is better than another one, and the effectiveness of the sampling strategies is case dependent. Instead of providing the ``best'' asynchronous training strategies (in fact it is impossible because of ``No Free Lunch''), \ours aim to provide a good abstraction and modularization of asynchronous federated learning, which can both cover most of the existing studies and provide flexibility for customization. Users can conveniently choose the  suitable asynchronous training strategies accordingly.

As shown in Figure~\ref{fig:revision_asyn_results}, from the experiments conducted on CIFAR-10, we find that the performance of applying response-related and group sampling strategies are similar to that of applying uniform strategy for this specific case.
It is not a surprising result, since both response-related and group sampling strategies are proposed to alleviate the model bias caused by the heterogeneity in participants' system resources, e.g., clients with weak responsiveness might contribute little to federated aggregation as their (staled) updates might be discounted or even dropped out. However, the distributions of client data are independent of their distributions of system resources, which causes that when clustering the clients according to their system resources, the data distributions among different clusters are similar, as shown in Figure~\ref{fig:label_distribution}. Such similarities among clusters limit the effectiveness of response-related and group sampling strategies compared to the uniform sampling strategies, since there might not exist model bias caused by the heterogeneity in participants' system resources.
 
Furthermore, we re-split CIFAR-10 dataset and make the distributions of clients' data related to their system resources: We randomly select some labels as ``rare'' labels, and instances with these ``rare'' labels are only owned by clients with weak responsiveness. The distributions among clients on this dataset (called bias-CIFAR-10) are shown in Figure~\ref{fig:label_distribution_bias}, and the experimental results are shown in Figure~\ref{fig:asyn_bias}. We can observe that applying response-related and group sampling strategies can achieve noticeable improvement compared to uniform sampling strategies, which empirically confirms our analysis above.

\end{document}